\newcommand\norm[1]{\left\lVert#1\right\rVert}
\newtheorem{proposition}{Proposition}[section]
\newtheorem{remark}{Remark}[section]
\newtheorem{theorem}{Theorem}[section]
\newtheorem{corollary}{Corollary}[section]
\newtheorem{lemma}{Lemma}[section]
\begin{document}

\newcommand{\best}[1]{\textcolor{blue}{\textbf{\underline{#1}}}}
\newcommand{\second}[1]{\textcolor{red}{\textbf{#1}}}
\newcommand{\third}[1]{\textcolor{black}{#1}}
\newcommand{\github}{\url{https://github.com/KrishnaswamyLab/blis}}

\title{BLIS-Net: Classifying and Analyzing Signals on Graphs}
\author{
  Charles Xu\textsuperscript{2,4},
  Laney Goldman\textsuperscript{3}, 
  Valentina Guo\textsuperscript{2}, 
  Benjamin Hollander-Bodie\textsuperscript{1}, 
  Maedee Trank-Greene\textsuperscript{5}, 
  Ian Adelstein\textsuperscript{1}, 
  Edward De Brouwer\textsuperscript{4}, 
  Rex Ying\textsuperscript{2}, 
  Smita Krishnaswamy\textsuperscript{2,4,*,\textdagger}, and 
  Michael Perlmutter\textsuperscript{6,7,*}
}

\date{}

\maketitle

{\centering
\textsuperscript{1}Department of Mathematics, Yale University, \\
\textsuperscript{2}Department of Computer Science, Yale University,\\
\textsuperscript{3}Department of Mathematics, Harvey Mudd College, \\
\textsuperscript{4}Department of Genetics, Yale School of Medicine, \\
\textsuperscript{5}Department of Applied Mathematics, University of Colorado Boulder, \\
\textsuperscript{6}Department of Mathematics, Boise State University, \\
\textsuperscript{7}Program in Computing, Boise State University \\
\textsuperscript{*}Co-senior author \\
\textsuperscript{\textdagger}Corresponding author: \href{mailto:smita.krishnaswamy@yale.edu}{smita.krishnaswamy@yale.edu} \\
\par}

\begin{abstract}

Graph neural networks (GNNs) have emerged as a powerful tool for tasks such as node classification and graph classification. However, much less work has been done on signal classification, where the data consists of many functions (referred to as signals) defined on the vertices of a single graph. These tasks require networks designed differently from those designed for traditional GNN tasks. Indeed, traditional GNNs  rely on localized low-pass filters, and signals of interest may have intricate multi-frequency behavior and exhibit long range interactions. %For instance, inductive biases such as homophily and localization, which are natural in the context of node classification are no longer useful in the context of signal-level tasks. 
This motivates us to introduce the BLIS-Net (Bi-Lipschitz Scattering Net), a novel GNN that builds on the previously introduced geometric scattering transform. Our network is able to capture both local and global signal structure and is able to capture both low-frequency and high-frequency information. We make several crucial changes to the original geometric scattering architecture which we prove increase the ability of our network to capture information about the input signal and show that BLIS-Net achieves superior performance on both synthetic and real-world data sets based on traffic flow and fMRI data.
%and biological graph signals.

\end{abstract}

\doparttoc % Tell to minitoc to generate a toc for the parts
\faketableofcontents % Run a fake tableofcontents command for the partocs

%\part{} % Start the document part
\parttoc % Insert the document TOC

\section{Introduction}
Recent years have seen a tremendous rise of Graph Neural Networks (GNNs) as a powerful tool for processing graph-structured data \citep{wu2020comprehensive}. Most of the research has focused on three families of tasks: graph-level tasks in which the data consists of many different graphs \citep{errica2019fair}, node-level tasks \citep{kipf2016semi} such as classifying each user in a large network, and edge-level tasks such as link prediction \citep{zhang2018link}. However, there is another family of problems, signal-level tasks,  which has received comparatively little attention. 

Here we focus on developing a high-performing network for signal classification, where the goal is to predict the label $y_i$ of a signal (function) $\mathbf{x_i}:V\rightarrow \mathbb{R}$ defined on the vertices of a weighted graph $G=(V,E,w)$. Notably, this is the natural generalization of image classification, which can be thought of as classifying many signals defined on a grid graph, to more irregular domains. %Indeed, classifiying many signals on a single fixed graph was the focus of the seminal work \cite{Henaff:DCNNGraph2015} which was partially responsible for the resurgence of interest in GNNs. 
Additionally, we note that signal-level tasks have many practical applications. For example, in traffic data, the road network is kept fixed but the number of cars at each intersection varies each day. A natural goal would be to identify, and then analyze, anomalous traffic patterns. In the analysis of brain-scan data, a patient's neuronal structures (i.e., voxels, neurons) can be modeled as a fixed graph with different levels of activity in each region across time, offering a useful framework for analyzing neural data \citep{li2021braingnn}.

However, GNNs that have been designed with node-level tasks in mind perform limited processing from a signal perspective. A foundational principle of most node-level analysis is homophily \citep{zhu2020beyond}, the idea that nodes are similar to their neighbors. Therefore, one wants to produce a hidden representation of each node which varies slowly among neighbors. However, when focused on signal-level tasks, the local homophily heuristic is not applicable and we find that it is important to capture (i) both low-frequency and high-frequency information in the input signal and (ii) both local and global behavior.

A natural choice when working with signals is to use methods from graph signal processing \citep{shuman:emerging2013} incorporated into a neural network. To this end, we rely on the geometric scattering transform \citep{gao:graphScat2018,gama:diffScatGraphs2018,gama:stabilityGraphScat2019,zou:graphCNNScat2018}, a multi-order, multi-scale transform that alternates wavelet transforms and non-linear modulus activations in the form of a deep (although typically fixed) network. The wavelets act as band-pass filters that capture information at different frequencies and scales. Therefore, geometric scattering provides  a solid starting place for signal-level tasks. 

However, geometric scattering alone is insufficient for two key reasons:   First, we establish that geometric scattering is not injective due to its modulus operation, and thus loses expressivity and ability to distinguish between certain signal classes. Second, scattering produces an unnecessarily high-dimensional representation of the signal that is not tuned to classification purposes. This motivates us to introduce BLIS-Net (Bi-Lipschitz Scattering Net) which incorporates advances to address these issues while facilitating integration into larger neural networks.  
BLIS-Net builds on previous work on the geometric scattering transform and introduces ReLU and reflected ReLU activations to preserve injectivity. Further, BLIS-Net incorporates dimension reduction and classification modules to demonstrate the modular use of bi-Lipschitz Scattering within a larger ML context. Our contributions can be summarized as follows:

\begin{enumerate}
    \item We introduce BLIS-Net, a novel GNN for signal classification on graphs.
    \item We prove two theorems (Theorem \ref{thm: not injective new} and Theorem \ref{prop: bi-Lipschitz}), which, when considered jointly, show that the BLIS module is provably more expressive than the geometric scattering transform. In particular, Theorem \ref{prop: bi-Lipschitz} shows that BLIS is bi-Lipschitz and therefore stably invertible. % for signal classification task. 
    \item We show that BLIS-Net achieves superior performance on both synthetic data and  real-world data sets derived from traffic and fMRI data.
   
\end{enumerate}

\section{Background}

\subsection{Notation and Preliminaries}\label{sec: notation}

Let $G=(V,E,w)$ be a  weighted, connected graph with vertices $V=\{ v_1, \cdots, v_n\}$. %, edges $E$, and weight function $w:E\rightarrow\mathbb{R}$. 
Throughout this paper, we will consider functions $\mathbf{x}:V\rightarrow\mathbb{R}$, which we refer to as \emph{graph signals}, and will in a slight abuse of notation not distinguish between the signal $\mathbf{x}$ and the vector $\mathbf{x}\in\mathbb{R}^n$ defined by $x_i=\mathbf{x}(v_i)$. We will let $A$ be the weighted adjacency of $G$, let $\mathbf{d}=A\mathbbm{1}$ denote the weighted degree vector, and let $D=\text{diag}(\mathbf{d})$ be the  degree matrix. 

We will let 
$L_{N} = I - D^{-1/2} A D^{-1/2}$
denote the symmetric normalized graph Laplacian. It is well-known that $L_N$ is positive semidefinite and admits an orthonormal basis (ONB) of eigenvectors with  $L_N\mathbf{v_i}=\omega_i\mathbf{v_i}$ with $0=\omega_1<\omega_2\leq \ldots\leq\omega_n\leq2$ (where the fact that $\omega_2>0$ follows from the assumption that $G$ is connected). This allows us to write   
$L_N = V\Omega V^T$, where $V$ is a matrix whose $i$-th column is $\mathbf{v_i}$ and $\Omega$ is a diagonal matrix with $\Omega_{i,i}=\omega_i$. Since the $\{\mathbf{v_i}\}_{i=1}^n$ form an ONB, we see that $V$ is unitary and $V^TV=I$. 

It is known (e.g., Section 2 of \citet{min2022can}) that \begin{equation}\label{eqn: quadratic form} \mathbf{x}^TL_N\mathbf{x}=\sum_{\{v_i,v_j\}\in E} (\tilde{x}_i-\tilde{x}_j)^2\end{equation}
where $\tilde{\mathbf{x}}\coloneqq\mathbf{D}^{-1/2}\mathbf{x}$ is a normalized version of $\mathbf{x}$. Therefore $L_N$ is viewed as a matrix whose quadratic form measures the smoothness of (normalized) signals. If we take $\mathbf{x}=\mathbf{v_i}$ we have $\mathbf{v_i}^TL_N\mathbf{v_i}=\omega_i$. Therefore, we may interpret each eigenvalue $\omega_i$ as a frequency and each eigenvector as a generalized Fourier mode. The high-frequency eigenvectors oscillate rapidly within local neighborhoods leading to large values in \eqref{eqn: quadratic form} whereas the low-frequency eigenvectors are smooth in the sense they vary slowly within graph neighborhoods.  Therefore, we view methods based on the eigendecomposition of the graph Laplacian as the natural extension of traditional signal processing to the graph setting. %Notably, this connection is strengthened by observing that in the case that $G$ is a cycle graph, the $\mathbf{v_i}$ coincide with traditional Fourier modes (up to normalization.)
 
% For polynomial $p(x)=a_0 + a_1 x + \cdots + a_n x^k$ and an $n\times n$ matrix $M$,  we let $p(M)= a_0 I + a_1 M + \cdots + a_n M^k$. 
We note that since $V$ is unitary, we have $p(L_N)=Vp(\Omega)V^T$ for any polynomial $p$. Thus, for suitably nice functions $f:[0,\infty)\rightarrow\mathbb{R}$ and diagonalizable matrix $M=B\Xi B^{-1}$ with $\Xi=\text{diag}(\xi_1,\ldots,\xi_n)$, we define \begin{equation}
 \label{eqn: spectral calculus}f(M) = B f(\Xi) B^{-1},
 \end{equation}
 where $f(\Xi) = \mathrm{diag}( 
 f(\xi_1),\ldots,
 f(\xi_n)).
$

\subsection{Graph signals and signal-level tasks}\label{sec: background on signals}

Graph signal-level tasks naturally arise in biological, natural, and social systems. % with network structure.
Some key examples include:

\begin{itemize}
\item Predicting properties of social networks may also be formulated as a signal-level task. %In social networks, once the focus shifts from classifying individuals to the spread of a property on the network, we obtain a signal-level task.
For instance, while classifying the political affiliation of an individual is a node-level task,  characterizing a polarized population (low-frequency) vs a non-polarized population (high-frequency) is a signal-level task.
\item Networks that occur in nature such as cell-communication networks have genes or cytokines as signals on a fixed graph substrate~\citep{moon:PHATE2017}. In many of these cases, the number of signals on the network is close to the number of nodes. 
\item In neuroscience, one can represent brain measurements as signals living on a fixed graph. The graph embodies the connectivity between different brain regions and the signal would be the brain activity measurements in each brain region. A typical task is then to predict external stimuli from brain signals~\citep{menoret2017evaluating}.
\end{itemize}
In general, a signal-level task is any machine learning task, e.g., classification, regression, or clustering, where the data set consists of many different signals defined on a single fixed graph.

\subsection{Diffusion Matrices }\label{sec: diffusion matrices}

Let $g(t)$ be a decreasing function on $[0,2]$ with $g(0) = 1, g(2)=0$, and let $T = g\left( L_N\right)$ (defined as in \eqref{eqn: spectral calculus}). By construction, $T$ is diagonalizable and $T=V\Lambda V^T$, where $\Lambda \coloneqq g\left( \Omega \right)$. As our primary example, which we will use in all of our numerical experiments, we will let
\begin{equation}\label{eqn: g canonical}
g(t) = 1 - \frac{t}{2}.
\end{equation} $T$ then becomes the symmetrized diffusion operator
\begin{align*}
    T & = I - \frac{L_N}{2} 
     = \frac{1}{2}\left(I + D^{-1/2} A D^{-1/2} \right).
\end{align*}
Next, we  let $\mathbf{w}\in\mathbb{R}^n$ be a weight vector with $w_i>0$, let $W\coloneqq\text{diag}(\mathbf{w})$, and  $K$ the asymmetric diffusion matrix 
\begin{equation}\label{eqn: K}
K \coloneqq W^{-1} T W.
\end{equation}
We let $\mathbf{L}^2_{\mathbf{w}}$ be the weighted inner product space with 
$$\langle \mathbf{x},\mathbf{y} \rangle_{\mathbf{w}}= \langle W\mathbf{x},W\mathbf{y}\rangle_{2}=\sum_{i=1}^nx_iy_iw_i,$$
and norm denoted by $\norm{\cdot}_{\mathbf{w}}$. One may verify that $K$ is self-adjoint on $\mathbf{L}^2_{\mathbf{w}}$ (see \cite{perlmutter2023understanding}, Lemma 1.1).% since

We note that if we set $W = D^{-1/2}$, then  $K$ becomes the  lazy random walk matrix, $
    P  \coloneqq \frac{1}{2}\left(I + A D^{-1} \right),$
which was used to construct diffusion wavelets in {\citet{gao:graphScat2018},
%\textcolor{red}{CITE FENG GAO GEOMETRIC SCATTERING FOR GRAPH DATA ANALYSIS}
whereas if we set $W=I,$ $K$ is simply equal to the matrix $T$ which was used in 
%\textcolor{red}{CITE GAMA DIFFUSION SCATTERING.}
\citet{gama:diffScatGraphs2018}.
More generally, \citet{perlmutter2023understanding} considered $W=D^{\alpha}$, $-.5\leq \alpha\leq.5$ and found emprically that the optimal choice of $\alpha$ varied from task to task.

\subsection{Graph Wavelets and Frame Properties}\label{sec: wavelets}

Let $J\geq 0$, and let $\mathcal{F}=\{F_{j}\}^{J+1}_{j=0}$ be a collection of $n\times n$ matrices. We say that $\mathcal{F}$ is a \emph{frame} on $\mathbf{L}^2_\mathbf{w}$ if there exist constants $0< c\leq C<\infty $ such that,
\begin{equation}\label{eqn: Frame condition}
    c\|\mathbf{x}\|_\mathbf{w}^{2} \leq \sum_{j=0}^{J+1}\norm{F_{j}\mathbf{x}}^{2}_\mathbf{w} \leq C\|\mathbf{x}\|_\mathbf{w}^2,\quad\forall \mathbf{x}\in \mathbb{R}^n.
\end{equation}
We say that $\mathcal{F}$ is a \emph{non-expansive frame} if $C\leq 1$ and that $\mathcal{F}$ is an \emph{isometry} if $c=C=1$.

We now construct two families of wavelet frames. Analogous to \citet{tong2022learnable}, let $\{s_j\}_{j=0}^{J+1}$ be a sequence of scales with $s_{0} = 0$ and $s_{1} = 1$, and $s_j < s_{j+1}$. For $0\leq j\leq J$, let $p_{j} (t)$ denote the polynomial defined by
$$p_{j} (t) \coloneqq 
    t^{s_{j}} - t^{s_{j+1}}$$ and let $p_{J+1}\coloneqq t^{s_{J+1}}$.
By construction, each $p_j$ is  nonnegative on $[0,1]$, and therefore, we may define $q_j(t) \coloneqq p_j(t)^{1/2}$ for $0\leq t\leq 1.$
We then define two graph wavelet transforms $$\mathcal{W}^{(1)}_{J}  \coloneqq \{ \Psi_{j}^{(1)},\Phi_J^{(1)} \}_{0 \leq j\textcolor{black}{\leq J}},\quad\Psi_j^{(1)}\coloneqq q_j(K),\quad\Phi_J^{(1)}\coloneqq q_{J+1}(K)$$ (where $q_j(K)$ is defined as in  \eqref{eqn: spectral calculus})
and $$\mathcal{W}^{(2)}_{J} \coloneqq \{ \Psi_{j}^{(2)}, \Phi_J^{(2)} \}_{0 \leq j \leq J},\quad \Psi_j^{(2)}\coloneqq p_j(K), \quad \Phi_J^{(2)}\coloneqq p_{J+1}(K).$$ %by 
The following result shows that $\mathcal{W}^{(1)}_J$ is an isometry and that $\mathcal{W}^{(2)}_{J}$ is a non-expansive frame on %the weighted inner product space 
$\mathbf{L}^2_{\mathbf{w}}$. For a proof, please see Appendix \ref{prf: frame}. 
\begin{proposition} \label{prop: frame}For any $\mathbf{x}\in\mathbb{R}^n$, we have,
\begin{equation}
    \label{eqn: W1 isometry}
\|\mathcal{W}^{(1)}_{J}\mathbf{x}\|^2_{\mathbf{w}}\coloneqq \| \Phi_J^{(1)}\mathbf{x} \|_{\mathbf{w}}^2 + \sum_{j=0}^J \|\Psi_j^{(1)} \mathbf{x} \|_{\mathbf{w}}^2 =\| \mathbf{x} \|_{\mathbf{w}}^2.
\end{equation}  
Additionally, there exists a constant $c>0$, depending only on the maximal scale $s_{J+1}$, such that 
    \begin{equation}\label{eqn: W2frame}
    c \| \mathbf{x} \|_{\mathbf{w}}^2 \leq \|\mathcal{W}^{(2)}_{J}\mathbf{x}\|^2_{\mathbf{w}}\leq  \| \mathbf{x} \|_{\mathbf{w}}^2\quad \text{for all }\mathbf{x}\in\mathbb{R}^n,
\end{equation}
where $\|\mathcal{W}^{(2)}_{J}\mathbf{x}\|^2_{\mathbf{w}}$ is defined analogously to $\|\mathcal{W}^{(1)}_{J}\mathbf{x}\|^2_{\mathbf{w}}$. % \coloneqq\| \Phi_J^{(2)} \mathbf{x} \|_{\mathbf{w}}^2 + \sum_{j=0}^J \|\Psi_j^{(2)} \mathbf{x} \|_{\mathbf{w}}^2$.
   \end{proposition}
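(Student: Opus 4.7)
The plan is to diagonalize $K$ on $\mathbf{L}^2_{\mathbf{w}}$ and then reduce both frame inequalities to pointwise polynomial inequalities on $[0,1]$. First I would verify that the spectral calculus \eqref{eqn: spectral calculus} applied to $K$ produces operators diagonalized by a common ONB of $\mathbf{L}^2_{\mathbf{w}}$. Writing $T=V\Lambda V^T$ with $\Lambda=\mathrm{diag}(1-\omega_i/2)$ and $K=W^{-1}TW$, the vectors $\mathbf{u_i} \coloneqq W^{-1}\mathbf{v_i}$ are $K$-eigenvectors with eigenvalues $\lambda_i\in[0,1]$, and a one-line check using $\langle\cdot,\cdot\rangle_{\mathbf{w}}=\langle W\cdot,W\cdot\rangle_2$ shows $\{\mathbf{u_i}\}$ is an ONB of $\mathbf{L}^2_{\mathbf{w}}$. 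For any function $f$ defined on $[0,1]$, expanding $\mathbf{x}=\sum_i c_i\mathbf{u_i}$ then gives the Parseval-type identity $\|f(K)\mathbf{x}\|_{\mathbf{w}}^2=\sum_i f(\lambda_i)^2 c_i^2$, while $\|\mathbf{x}\|_{\mathbf{w}}^2=\sum_i c_i^2$.

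Next I would record the telescoping identity
\begin{equation*}
\sum_{j=0}^{J+1} p_j(\lambda) = \sum_{j=0}^{J}\bigl(\lambda^{s_j}-\lambda^{s_{j+1}}\bigr) + \lambda^{s_{J+1}} = \lambda^{s_0} = 1,
\end{equation*}
and use $q_j^2=p_j$ on $[0,1]$ to conclude \eqref{eqn: W1 isometry}: summing the Parseval identity over $j$ yields $\|\mathcal{W}^{(1)}_J\mathbf{x}\|_{\mathbf{w}}^2 = \sum_i\bigl(\sum_j p_j(\lambda_i)\bigr)c_i^2 = \|\mathbf{x}\|_{\mathbf{w}}^2$.

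For $\mathcal{W}^{(2)}_J$, the same machinery yields $\|\mathcal{W}^{(2)}_J\mathbf{x}\|_{\mathbf{w}}^2 = \sum_i h(\lambda_i)c_i^2$ with $h(\lambda)\coloneqq\sum_{j=0}^{J+1}p_j(\lambda)^2$, and I would bound $h$ pointwise on $[0,1]$. The upper bound $h(\lambda)\le 1$ follows from $p_j(\lambda)\in[0,1]$ (because $s_j<s_{j+1}$ and $\lambda\in[0,1]$ force $\lambda^{s_{j+1}}\le\lambda^{s_j}\le 1$), whence $p_j(\lambda)^2\le p_j(\lambda)$, combined with the telescoping identity. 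For the lower bound I would simply drop every term except $j=0$ and $j=J+1$; using $s_0=0$ and $s_1=1$ this gives
\begin{equation*}
h(\lambda) \ge (1-\lambda)^2 + \lambda^{2 s_{J+1}} \eqqcolon \varphi(\lambda).
\end{equation*}
The function $\varphi$ is continuous on the compact interval $[0,1]$ and strictly positive, since its two summands vanish only at $\lambda=1$ and $\lambda=0$ respectively and never simultaneously; it therefore attains a positive minimum $c=c(s_{J+1})>0$ depending only on $s_{J+1}$, as claimed.

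The main obstacle is really just the bookkeeping in the weighted space: one must check that $K$, which is \emph{not} symmetric for the standard inner product, is nevertheless diagonalized by a $\mathbf{L}^2_{\mathbf{w}}$-orthonormal basis, so that the Parseval identity of the first step is valid. Once that is in hand, both frame bounds collapse to the two elementary pointwise inequalities above, and the fact that the lower constant depends only on $s_{J+1}$ drops out automatically from retaining only the extreme indices $j=0$ and $j=J+1$.
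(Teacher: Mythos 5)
Your proposal is correct and follows essentially the same route as the paper's proof: both diagonalize $K$ via $K=W^{-1}V\Lambda V^TW$ so that the frame bounds reduce to the extremal values of $\sum_j q_j(\lambda)^2$ and $\sum_j p_j(\lambda)^2$ over the eigenvalues in $[0,1]$, use the telescoping identity $\sum_j p_j=1$ for the isometry and the upper bound, and obtain the lower bound by retaining only the $j=0$ and $j=J+1$ terms, giving $\inf_{\lambda\in[0,1]}\bigl((1-\lambda)^2+\lambda^{2s_{J+1}}\bigr)>0$. The only cosmetic difference is that you phrase the spectral reduction as a Parseval expansion in the $\mathbf{L}^2_{\mathbf{w}}$-orthonormal basis $\{W^{-1}\mathbf{v_i}\}$, whereas the paper performs the equivalent change of variables $\mathbf{y}=V^TW\mathbf{x}$ inside a Rayleigh quotient.
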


\subsection{The Graph Scattering Transform}\label{sec: scatttering}

Given a wavelet frame $\mathcal{W}_J=\{\Psi_j\}_{j=0}^J\cup\{\Phi_J\}$  such as  $\mathcal{W}^{(1)}_J$ and $\mathcal{W}^{(2)}_J$, the graph scattering transform is a multilayer feedforward network consisting of alternating wavelet transforms and non-linearities (building off of an analogous construction \citep{mallat:scattering2012} modeling CNNs for Euclidean data such as images). In particular, given a sequence of scales $j_1,\ldots,j_m$, we define
\begin{equation}\label{eqn: U}
U[j_1,\ldots,j_m]\mathbf{x}=M\Psi_{j_m}\ldots M\Psi_{j_1}\mathbf{x},
\end{equation}
where $M\mathbf{x}=|\mathbf{x}|$ is the componentwise modulus (absolute value) operator $(M\mathbf{x})_i=|x_i|$. Then, after computing each of the $U[j_1,\ldots,j_m]\mathbf{x}$, one may extract $m$-th order scattering coefficients via the low-pass filter $\Phi_J$,
$$
S_J[j_1,\ldots,j_m]\mathbf{x}=\Phi_JU[j_1,\ldots,j_m]\mathbf{x}.
$$ 
If one wishes to apply the graph scattering transform to tasks such as node classification, they may compute the scattering coefficients up to order $M$  and take the scattering coefficients evaluated at each vertex, $\{S_J[j_1,\ldots,j_m]\mathbf{x}(v): 0\leq m \leq M, 0\leq j_1,\ldots, j_m
\leq J\}$ %\textcolor{green}{Which paper requires these increasing scales?} 
as a collection of node features which may then be input into another machine learning algorithm  such as a  a multilayer perceptron. Alternatively, if one wishes to apply the graph scattering transform to whole-graph level tasks such as graph classification, one first performs a global aggregation (e.g., summation) before applying the final classifier. Importantly, we note that coefficients of orders $m=0,\ldots,M$ (where the zeroth-order coefficient is simply $\Phi_J\mathbf{x}$) are fed into the classifier, not just the $m$-th order scattering coefficients.

\section{BLIS-Net}
Here, we introduce BLIS-Net a novel neural network for graph signals, which as discussed in Section \ref{sec: background on signals}, arise frequently in the natural and behavioral sciences. 
In order to create a network that can classify or regress properties of signals, one needs to create a rich representation of the signal. A natural choice for such a representation is a signal processing transform such as the geometric scattering transform discussed in Section \ref{sec: scatttering}. Indeed, it was shown that the geometric scattering transform was an effective tool for identifying anomalies in traffic data in \citet{bodmann2022scattering}. However, the geometric scattering transform has limitations in its ability to characterize its input signal, which motivates us to introduce the BLIS module.

The primary deficiency of the geometric scattering transform which we seek to address is lack of injectivity. Since the scattering transform is constructed using the modulus in \eqref{eqn: U}, it is trivial that the scattering transform will produce identical representations of $\mathbf{x}$ and $-\mathbf{x}$. The following theorem shows that there are also non-trivial examples of distinct signals with identical scattering coefficients. This may be proved by constructing signals $\mathbf{x_1}$ and $\mathbf{x_2}$, where each $\mathbf{x_j}$ is supported on two disjoint regions, such that $\mathbf{x_1}\neq\pm\mathbf{x_2}$, but $\mathbf{x_1}$ and $\mathbf{x_2}$ have identical scattering coefficients.  We also note that in Section \ref{sec: experiments synthetic}, we conduct experiments on synthetic data modeled after this choice of $\mathbf{x_1}$ and $\mathbf{x_2}$ to further illustrate the limitations of the geometric scattering transform which are addressed by BLIS.

\begin{theorem}\label{thm: not injective new}
    Under certain assumptions, there exist signals $\mathbf{x_1}$ and $\mathbf{x_2}$ with identical scattering coefficients such that $\mathbf{x_1}\neq\pm\mathbf{x_2}$\footnote{See Appendix \ref{sec: proof of noninjectivity} for a proof and detailed theorem statement.}.
    %The graph scattering transform is not injective, even up to the equivalence relation $\mathbf{x}\sim\pm\mathbf{x}$.
\end{theorem}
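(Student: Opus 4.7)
The plan is to construct an explicit counterexample: a graph $G$ (with mildly relaxed assumptions) and two signals $\mathbf{x_1},\mathbf{x_2}$ that differ from each other and their negatives, yet have matching scattering coefficients at every order, including the zeroth. The construction is based on the observation that for two signals supported on disjoint regions, the componentwise modulus erases any relative sign between the two pieces.

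First, I would take $G$ to be a graph whose vertex set splits as $V = V_A \sqcup V_B$ with no edges between $V_A$ and $V_B$ (this is the \emph{disconnectedness assumption}, the first of the ``certain assumptions''). The weighted adjacency $A$, degree matrix $D$, symmetric operator $T$, and hence $K = W^{-1}TW$ are all block-diagonal with respect to this decomposition. Consequently, every polynomial $p(K)$, and in particular each $\Psi_j$ and $\Phi_J$ coming from $\mathcal{W}^{(1)}_J$ or $\mathcal{W}^{(2)}_J$, is block-diagonal too. Next, I would further assume that the block $V_B$ is bipartite (the second assumption); as the simplest instance, take $V_B$ to be a single edge, so that $T|_{V_B}$ has an eigenvector $\mathbf{b}$ with eigenvalue $0$ (e.g.\ $\mathbf{b}=\mathbf{e_u}-\mathbf{e_v}$ for the two endpoints $u,v$ of the edge). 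Fix any nonzero signal $\mathbf{a}$ supported on $V_A$, extend $\mathbf{b}$ by zero to $V$, and define
\begin{equation*}
\mathbf{x_1} = \mathbf{a} + \mathbf{b}, \qquad \mathbf{x_2} = \mathbf{a} - \mathbf{b}.
\end{equation*}
Since both $\mathbf{a}$ and $\mathbf{b}$ are nonzero and supported on disjoint vertex sets, one checks immediately that $\mathbf{x_1} \neq \pm\mathbf{x_2}$.

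Next I would show agreement of the higher-order scattering coefficients. By block-diagonality, $\Psi_j\mathbf{a}$ is supported on $V_A$ and $\Psi_j\mathbf{b}$ on $V_B$, so $\Psi_j\mathbf{x_1}$ and $\Psi_j\mathbf{x_2}$ agree vertex-by-vertex in absolute value, and in fact
\begin{equation*}
M\Psi_j\mathbf{x_1} = |\Psi_j\mathbf{a}|+|\Psi_j\mathbf{b}| = M\Psi_j\mathbf{x_2}
\end{equation*}
for every $j$. In other words, after a single wavelet-then-modulus layer the two signals become \emph{literally equal}, not merely equal in modulus. By a one-line induction on the path length $m$, this forces $U[j_1,\ldots,j_m]\mathbf{x_1}=U[j_1,\ldots,j_m]\mathbf{x_2}$, and therefore $S_J[j_1,\ldots,j_m]\mathbf{x_1}=S_J[j_1,\ldots,j_m]\mathbf{x_2}$, for every ordered choice of scales with $m\geq 1$.

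The main obstacle, and the reason the bipartiteness assumption is introduced, is the zeroth-order coefficient $\Phi_J\mathbf{x}$, which does not pass through a modulus and therefore sees the sign of $\mathbf{b}$. To handle it I would compute $\Phi_J\mathbf{x_1}-\Phi_J\mathbf{x_2}=2\Phi_J\mathbf{b}$ and observe that both candidate low-pass filters satisfy $\Phi_J = p_{J+1}(K)$ or $q_{J+1}(K)$ with $p_{J+1}(0)=q_{J+1}(0)=0$; since $\mathbf{b}\in\ker K$ by construction, the spectral calculus of \eqref{eqn: spectral calculus} gives $\Phi_J\mathbf{b}=0$, and hence $\Phi_J\mathbf{x_1}=\Phi_J\mathbf{x_2}$. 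Combining this with the previous step proves that $\mathbf{x_1}$ and $\mathbf{x_2}$ are indistinguishable to the entire geometric scattering transform while differing up to sign, establishing the theorem. The detailed statement in the appendix would simply record the two structural hypotheses (a nontrivial disconnection of $G$ together with a bipartite component) used in this construction.
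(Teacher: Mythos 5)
Your argument is internally consistent, and the mechanism you identify --- the modulus erases the relative sign between two pieces of a signal whose wavelet transforms have disjoint supports, while the zeroth-order coefficient (which sees the sign) must be killed separately by placing the sign-flipped piece in the kernel of $K$ --- is exactly the mechanism the paper exploits. However, there is one substantive problem: your construction requires $G$ to be disconnected, whereas the paper's standing assumption in Section \ref{sec: notation} is that $G$ is connected (the eigenvalue ordering $0=\omega_1<\omega_2$ depends on it). The detailed statement, Theorem \ref{thm: not injective} in Appendix \ref{sec: proof of noninjectivity}, keeps $G$ connected and instead assumes either (i) $G$ is bipartite, in which case $\omega_n=2$ gives $K$ an eigenvalue $0$ alongside the eigenvalue $1$, and the signals $\mathbf{u_1}\pm\mathbf{u_2}$ built from the corresponding eigenvectors are annihilated by every $\Psi_j$ with $j\geq 1$ and sent to $\pm\mathbf{u_2}$ by $\Psi_0$; or (ii) $\mathrm{diam}(G)>2s_{J+1}$, in which case the disjointness of the supports of $\Psi_j\boldsymbol{\delta}_{S_1}$ and $\Psi_j\boldsymbol{\delta}_{S_2}$ is obtained not by disconnecting the graph but from a locality lemma (Lemma \ref{lem: supports}) showing that applying $K^t$ enlarges a support by at most $t$ hops, combined with condition \eqref{eqn: D}. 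Your disconnected graph is the degenerate limit of case (ii), and the result is much less informative there: on a disconnected graph every filter is block-diagonal and the whole scattering transform factors over components, so non-injectivity is unsurprising. To land inside the paper's setting you would need to replace ``no edges between $V_A$ and $V_B$'' with ``$V_A$ and $V_B$ are at distance greater than $2s_{J+1}$,'' which is precisely what the paper's locality lemma is for.

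Two smaller points. First, $\mathbf{e_u}-\mathbf{e_v}$ is a null vector of $T$, not of $K=W^{-1}TW$; you need $\mathbf{b}=W^{-1}(\mathbf{e_u}-\mathbf{e_v})$, which is harmless since $W$ is diagonal and positive, so the support and sign pattern are preserved --- but as written the step ``$\mathbf{b}\in\ker K$ by construction'' is not literally true. Second, on the positive side, your treatment of the zeroth-order coefficient is cleaner than the paper's in the non-bipartite case: you obtain exact equality $\Phi_J\mathbf{x_1}=\Phi_J\mathbf{x_2}$ by arranging $\Phi_J\mathbf{b}=0$, whereas the paper's large-diameter construction only yields $M\Phi_J\mathbf{x_1}=M\Phi_J\mathbf{x_2}$ and needs a separate three-set modification (given in a remark) to make even the aggregated zeroth-order coefficients coincide; the paper's own detailed theorem accordingly claims agreement only of the $m$-th order coefficients for $m\geq 1$.
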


This result shows that the geometric scattering transform has limits on its expressive power since there are non-equivalent signals of which it produces identical representations. Notably, the importance of injectivity has also been noted in the context of graph classification. In particular, \citet{xu2018how} showed that using an injective aggregation function (in a message-passing network) was the key to producing a maximally expressive graph neural network.
We also note that the result of Theorem \ref{thm: not injective new} is somewhat surprising since \cite{mallat:waveletPhaseRetrieval2015} showed  there were no non-trivial signal pairs with identical scattering coefficients for the original Euclidean scattering transform \citep{mallat:scattering2012}.

\subsection{The BLIS Module}\label{sec: BLIS}

%We  now introduce the BLIS module which makes several modifications to the geometric scattering transform.
Recall from Section \ref{sec: scatttering} that the geometric scattering transform %is constructed using a wavelet frame $\mathcal{W}_J=\{\Psi_j\}_{j=0}^J\cup \{\Phi_J\}$ and 
uses an alternating sequence of wavelet transforms and componentwise modulus operators  to produce coefficients such as %$S_J[j_1]\mathbf{x}=\Phi_JM\Psi_{j_1}\mathbf{x}$ or 
$S_J[j_1,j_2]\mathbf{x}=\Phi_JM\Psi_{j_2}M\Psi_{j_1}\mathbf{x}$.  BLIS makes the following modifications: 

\begin{enumerate}
    \item To induce injectivity, BLIS uses two different activation functions $\sigma_1(\mathbf{x})\coloneqq\text{ReLU}(\mathbf{x})$ and $\sigma_2(\mathbf{x})\coloneqq\text{ReLU}(-\mathbf{x})$. Notably, we have 
\begin{equation*}%\label{eqn: sigma plus sigma equals M}
\sigma_1(\mathbf{x})+\sigma_2(\mathbf{x})=M\mathbf{x}.
\end{equation*}
Thus, the use of  $\sigma_1$ and $\sigma_2$ may be viewed as decomposing the  absolute value into two disjointly supported non-linearities. 
    Additionally, this modification %readily leads us to Lemma \ref{lem: sigmas bi-Lipschitz}, 
    is crucial to proving Theorem \ref{prop: bi-Lipschitz} which shows that the BLIS module is injective on $\mathbb{R}^n$.
    \item To account for all frequency bands of the signal, BLIS uses the entire wavelet frame $\mathcal{W}_J=\{ \Psi_{j}\}_{j=0}^J\cup\{\Phi_J\}$ in each layer. This is in contrast to the geometric scattering transform which
    does not utilize the low-pass filter $\Phi_J$ until after the final non-linearity. This modification is needed to ensure that BLIS has the bi-Lipshitz property established in Theorem \ref{prop: bi-Lipschitz} and is also key to the conservation of energy property established in Theorem \ref{prop: U nonexpansive}. This latter property ensures that BLIS doesn't lose information which may be critical for tasks such as classification. 
    \item Since BLIS uses the entire wavelet frame in each layer, all of the energy of the input signal is preserved in each layer. Therefore, the only  output of an $m$-layer BLIS module is the coefficients produced in the final layer (i.e., through a sequence of $m$ filterings followed by activations). This is in contrast to the geometric scattering transform which outputs first-order coefficients $S_J[j_1]\mathbf{x},$ second-order coefficients $S_J[j_1,j_2]\mathbf{x}$, etc., up to $m$-th order coefficients $S_J[j_1,\ldots,j_m]\mathbf{x}$ (in addition to a single zeroth-order coefficient which is simply $\Phi_J\mathbf{x}$). This makes it straightforward to incorporate the BLIS module into a neural network without the need for skip connections. \label{difference no skip}
\end{enumerate}

\begin{figure*}
        \centering
    \includegraphics[width=\linewidth]{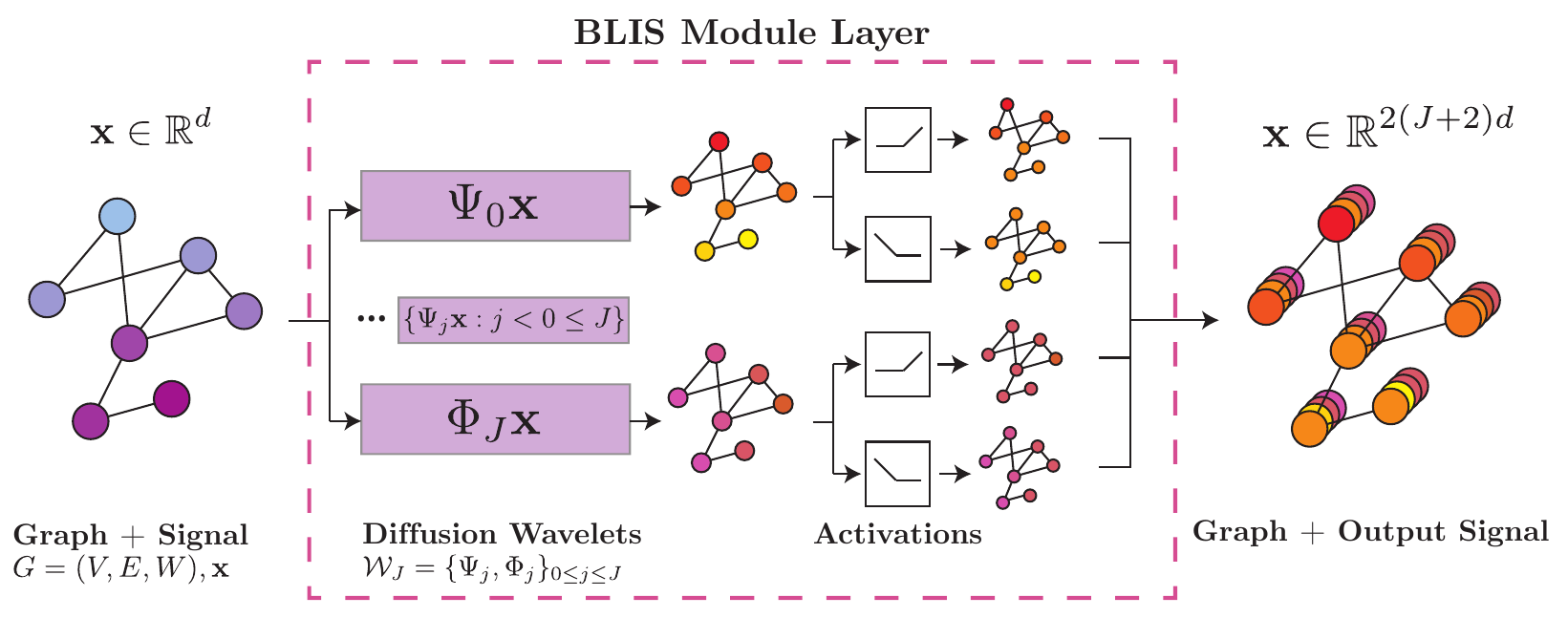}
    \caption{The BLIS module: We first apply multiscale diffusion wavelet transform to the input signal and then two activation functions, $\sigma_1$ and $\sigma_2$. The output  is a multivariate signal, with $2(J+2)$ times the original dimension.}
    \label{fig:architecture}
\end{figure*}

To explicitly define the BLIS module, we rewrite the wavelet frame $\mathcal{W}_J=\{\Psi_j\}_{j=0}^J\cup\{\Phi_J\}$ as  $\mathcal{F}=\{F_j\}_{j=0}^{J+1}$ where $F_j=\Psi_j$ for $0\leq j\leq J$ and $F_{J+1}=\Phi_J$. We let $m\geq 1$ denote the depth of the network and define  
\begin{equation}\label{eqn: all the mth order 
coeffs}
B[j_{1},k_{1},\cdots,j_{m},k_{m}](\mathbf{x})
    \coloneqq\sigma_{k_{m}}(F_{{j}_{m}}\sigma_{k_{m-1}}(F_{{j}_{m-1}}\cdots \sigma_{k_{1}}(F_{{j}_{1}}\mathbf{x}))\cdots)
\end{equation}
for  $0\leq j_i\leq J+1,$ and $k_i\in\{1,2\}.$
We then let $\mathbf{B}_m(\mathbf{x})$ denote the set of all the $B[j_1,k_1,\ldots,j_m,k_m]$.

We remark that one could readily modify the BLIS framework to include other frames $\mathcal{F}$ in place of the wavelets $\mathcal{W}^{(1)}_J$ or $\mathcal{W}^{(2)}_J$. For example, one could use the spectral wavelets considered in \citet{zou:graphCNNScat2018} or frames obtained as the union of different wavelet families. Importantly, the proofs of Theorems \ref{prop: bi-Lipschitz} and \ref{prop: U nonexpansive} do not depend on the specific wavelet construction, but only on the frame constants $0<c\leq C<\infty$. Therefore, both of these results apply to variations of BLIS constructed via arbitrary $\mathcal{F}$ satisfying \eqref{eqn: Frame condition}.

\subsection{The bi-Lipschitz Property}

Theorem \ref{thm: not injective new}, stated above, shows that the geometric scattering transform is not injective on $\mathbb{R}^n$ (even up to the equivalence relation $\mathbf{x}\sim\pm\mathbf{x}$). Therefore, it may lack the ability to effectively characterize graph signals. By contrast, the following theorem shows that BLIS is a bi-Lipschitz map on weighted inner product space $\mathbf{L}^2_\mathbf{w}$ introduced in Section \ref{sec: diffusion matrices} where we equip the image space with the mixed norm obtained by taking the (unweighted) $\ell^2$ norm of the weighted $\ell^2$ norms of the individual  $B[j_1,k_1,\ldots,j_m,k_m]\mathbf{x}$, so that 
\begin{equation*}
     \norm{\mathbf{B}_m (\mathbf{x})}_{\mathbf{w},2}^2 
     = \sum_{k_{i}=1}^{2} \sum_{j_{i}=0}^{J+1}\norm{B[j_{1},k_{1},\cdots,j_{m},k_{m}](\mathbf{x})}_{\mathbf{w}}^2.
\end{equation*}

\begin{theorem} \label{prop: bi-Lipschitz}
$\mathbf{B}_m$ is bi-Lipshitz on $\mathbf{L}^2_\mathbf{w}$, i.e.,  
    $$\left(\frac{c}{2}\right)^m\norm{\mathbf{x}-\mathbf{y}}_{\mathbf{w}}^2 \leq \norm{\mathbf{B}_{m}(\mathbf{x})-\mathbf{B}_{m}(\mathbf{y})}_{\mathbf{w},2}^2 \leq C^{m}\norm{\mathbf{x}-\mathbf{y}}_{\mathbf{w}}^2$$
    for all $\mathbf{x},\mathbf{y}\in\mathbb{R}^n$, where $0<c\leq C<\infty$ are the  frame bounds for the wavelets defined as in \eqref{eqn: Frame condition}.
\end{theorem}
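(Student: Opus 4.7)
The plan is to prove Theorem \ref{prop: bi-Lipschitz} by induction on the depth $m$, with the inductive step built from two ingredients: the frame bounds of Proposition \ref{prop: frame} (or more generally \eqref{eqn: Frame condition}) applied coordinate-by-coordinate in the filter index, and a tight two-sided estimate for the complementary ReLU pair $(\sigma_1,\sigma_2)$ that replaces the modulus. Since everything in sight acts componentwise in the activation step and the weights $w_i>0$ are just multiplicative, I expect the weighted norm $\|\cdot\|_{\mathbf{w}}$ to cause no trouble beyond bookkeeping.

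First I would establish the pointwise identity
\[
|\sigma_1(a)-\sigma_1(b)|^2+|\sigma_2(a)-\sigma_2(b)|^2 \;=\; \tfrac{1}{2}\bigl((a-b)^2+(|a|-|b|)^2\bigr),
\]
valid for all $a,b\in\mathbb{R}$. This follows from the two ``complementary'' identities $\sigma_1(x)-\sigma_2(x)=x$ and $\sigma_1(x)+\sigma_2(x)=|x|$ together with the parallelogram identity $2(u^2+v^2)=(u+v)^2+(u-v)^2$. Combined with the reverse triangle inequality $(|a|-|b|)^2\leq (a-b)^2$, this immediately gives
\[
\tfrac{1}{2}(a-b)^2 \;\leq\; |\sigma_1(a)-\sigma_1(b)|^2+|\sigma_2(a)-\sigma_2(b)|^2 \;\leq\; (a-b)^2.
\]
Multiplying by the weights $w_i$ and summing across coordinates lifts this to the vector statement
\[
\tfrac{1}{2}\|\mathbf{u}-\mathbf{v}\|_{\mathbf{w}}^2 \;\leq\; \sum_{k\in\{1,2\}}\|\sigma_k(\mathbf{u})-\sigma_k(\mathbf{v})\|_{\mathbf{w}}^2 \;\leq\; \|\mathbf{u}-\mathbf{v}\|_{\mathbf{w}}^2
\]
for every $\mathbf{u},\mathbf{v}\in\mathbb{R}^n$.

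Next I would combine this with the frame inequality \eqref{eqn: Frame condition} applied to $\mathbf{x}-\mathbf{y}$ via linearity of each $F_j$. Summing the two-sided activation bound over the frame elements yields the one-layer estimate
\[
\tfrac{c}{2}\|\mathbf{x}-\mathbf{y}\|_{\mathbf{w}}^2 \;\leq\; \sum_{j=0}^{J+1}\sum_{k=1}^{2}\|\sigma_k(F_j\mathbf{x})-\sigma_k(F_j\mathbf{y})\|_{\mathbf{w}}^2 \;\leq\; C\|\mathbf{x}-\mathbf{y}\|_{\mathbf{w}}^2,
\]
which is exactly the $m=1$ case of the theorem.

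Finally I would induct on $m$. Given the bound for depth $m-1$, set $\mathbf{u}_{\mathbf{j},\mathbf{k}}=B[j_1,k_1,\dots,j_{m-1},k_{m-1}](\mathbf{x})$ and $\mathbf{v}_{\mathbf{j},\mathbf{k}}=B[j_1,k_1,\dots,j_{m-1},k_{m-1}](\mathbf{y})$. Applying the one-layer estimate to each pair $(\mathbf{u}_{\mathbf{j},\mathbf{k}},\mathbf{v}_{\mathbf{j},\mathbf{k}})$ and summing over the first $m-1$ indices yields
\[
\tfrac{c}{2}\|\mathbf{B}_{m-1}(\mathbf{x})-\mathbf{B}_{m-1}(\mathbf{y})\|_{\mathbf{w},2}^2 \;\leq\; \|\mathbf{B}_{m}(\mathbf{x})-\mathbf{B}_{m}(\mathbf{y})\|_{\mathbf{w},2}^2 \;\leq\; C\|\mathbf{B}_{m-1}(\mathbf{x})-\mathbf{B}_{m-1}(\mathbf{y})\|_{\mathbf{w},2}^2,
\]
and chaining with the inductive hypothesis produces the claimed $(c/2)^m$ and $C^m$ constants. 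The main technical obstacle is really just the sharp two-sided pointwise estimate for $(\sigma_1,\sigma_2)$; the $\frac{1}{2}$ factor in the lower bound is genuine (attained when $a$ and $b$ have the same sign is actually when the upper bound is tight; the lower bound factor $1/2$ is saturated as $(|a|-|b|)^2\to (a-b)^2$, i.e.\ when $a$ and $b$ have opposite signs), and propagating it cleanly through the induction is what yields the $(c/2)^m$ rather than $c^m$ constant. Everything else is bookkeeping across the frame and activation indices.
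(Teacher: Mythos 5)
Your proposal is correct and follows essentially the same route as the paper: the same key two-sided estimate for the pair $(\sigma_1,\sigma_2)$ (the paper's Lemma \ref{lem: sigmas bi-Lipschitz}), lifted coordinatewise to $\|\cdot\|_{\mathbf{w}}$, combined with the frame bounds \eqref{eqn: Frame condition} for the one-layer case, and then an induction on depth that factors off the last layer exactly as in Appendix \ref{prf: bi-Lipschitz}. Your derivation of the scalar bound via the exact identity $|\sigma_1(a)-\sigma_1(b)|^2+|\sigma_2(a)-\sigma_2(b)|^2=\tfrac12\bigl((a-b)^2+(|a|-|b|)^2\bigr)$ (from $\sigma_1-\sigma_2=\mathrm{id}$, $\sigma_1+\sigma_2=|\cdot|$, and the parallelogram law) is cleaner than the paper's sign case analysis, though your closing parenthetical about tightness is garbled: the lower bound is saturated when $(|a|-|b|)^2\to 0$, i.e., $|a|=|b|$ with $a=-b$, not when $(|a|-|b|)^2\to(a-b)^2$.
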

For a proof of Theorem \ref{prop: bi-Lipschitz}, please see Appendix \ref{prf: bi-Lipschitz}.
The following corollary is immediate from the first inequality in Theorem \ref{prop: bi-Lipschitz} and the fact that $\|\cdot\|_{\mathbf{w},2}$ is
 a norm.
\begin{corollary}
$\mathbf{B}_m$ is injective on $\mathbb{R}^n$.
\end{corollary}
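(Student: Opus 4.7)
The plan is to derive the corollary directly from the lower Lipschitz bound in Theorem \ref{prop: bi-Lipschitz}, since once one has a lower bound of the form $(c/2)^m \norm{\mathbf{x}-\mathbf{y}}_{\mathbf{w}}^2 \leq \norm{\mathbf{B}_m(\mathbf{x})-\mathbf{B}_m(\mathbf{y})}_{\mathbf{w},2}^2$ with a strictly positive constant, injectivity is automatic. Concretely, I would assume $\mathbf{B}_m(\mathbf{x})=\mathbf{B}_m(\mathbf{y})$ for two signals $\mathbf{x},\mathbf{y}\in\mathbb{R}^n$; this forces each coordinate block $B[j_1,k_1,\ldots,j_m,k_m](\mathbf{x})$ to equal the corresponding block for $\mathbf{y}$, and hence $\norm{\mathbf{B}_m(\mathbf{x})-\mathbf{B}_m(\mathbf{y})}_{\mathbf{w},2}=0$ because $\norm{\cdot}_{\mathbf{w},2}$ is a norm on the image space (it is an $\ell^2$ sum of weighted $\ell^2$ norms, each of which is positive definite since every weight $w_i$ is strictly positive).

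Then I would feed this back into the lower bound of Theorem \ref{prop: bi-Lipschitz} to obtain $(c/2)^m\norm{\mathbf{x}-\mathbf{y}}_{\mathbf{w}}^2\leq 0$. Because $c>0$ (the frame constant from Proposition \ref{prop: frame}, which depends only on the maximal scale $s_{J+1}$) and $m\geq 1$, the prefactor $(c/2)^m$ is strictly positive, so $\norm{\mathbf{x}-\mathbf{y}}_{\mathbf{w}}=0$. One more appeal to the positive-definiteness of $\norm{\cdot}_{\mathbf{w}}$ on $\mathbb{R}^n$ (again from $w_i>0$) yields $\mathbf{x}=\mathbf{y}$, which is the desired injectivity.

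There is no real obstacle here: the entire content sits in Theorem \ref{prop: bi-Lipschitz}, and the corollary is a one-line consequence of the implication ``positive lower Lipschitz constant $\Rightarrow$ injective.'' The only thing worth flagging, for the sake of rigor, is confirming that $c>0$ (so that the lower bound is nontrivial) and that the ambient norms on both sides of the inequality are genuine norms rather than mere seminorms; both are guaranteed by the assumption that the weight vector $\mathbf{w}$ has strictly positive entries.
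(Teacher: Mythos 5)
Your argument is correct and is exactly the paper's proof: the corollary is noted there to be immediate from the lower bound in Theorem \ref{prop: bi-Lipschitz} together with the fact that $\|\cdot\|_{\mathbf{w},2}$ is a genuine norm, which is precisely the chain of implications you spell out. Your added remarks on $c>0$ and the positive-definiteness coming from $w_i>0$ are accurate but only make explicit what the paper leaves implicit.
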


We note that the lower bound established in Theorem \ref{prop: bi-Lipschitz} implies the existence of a Lipschitz continuous inverse map\footnote{Discussed further in Appendix \ref{app: inverse}.} that reconstructs $\mathbf{x}$ from $\mathbf{B}_m(\mathbf{x})$. This property is particularly interesting in light of work \citep{zou:graphScatGAN2019,Castro2020,BhaskarGCPK22} which has attempted to invert the geometric scattering transform as part of an encoder-decoder graph-generation network.

\subsection{ Properties inherited from scattering}

In addition to the bi-Lipschitz property, we may also show that BLIS retains desirable theoretical properties from  geometric scattering such as permutation equivariance and conservation of energy.

Permutation equivariance  is the property that if we reorder the vertices $v_1,\ldots, v_n$ (and therefore reorder order the entries of the input signal since $x_i=\mathbf{x}(v_i)$), then the representations of the vertices are reordered in the same manner. It is crucial to the success of a well-designed  GNN since it ensures that the network captures the intrinsic graph structure of the data rather than relying on the ordering of the vertices. The following theorem shows BLIS is  permutation equivariant. For a proof please see Appendix \ref{sec: proof equivariance}. 

\begin{theorem}\label{thm: equivariance}
    Let $\Pi$ be a permutation matrix corresponding to a reordering of the nodes. Then, 
    \begin{equation*}
\Pi B[j_{1},k_{1},\cdots,j_{m},k_{m}]\mathbf{x}=B[j_{1},k_{1},\cdots,j_{m},k_{m}]\Pi\mathbf{x},   \end{equation*}
for all $j_1,k_1,\ldots,j_m,k_m$,
where on the right-hand side $B[j_1,k_1,\ldots,j_m,k_m]$ is defined in terms of the permuted ordering (with the permuted weight vector $\Pi\mathbf{w}$).
\end{theorem}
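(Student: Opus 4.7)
The plan is to prove the statement by induction on the depth $m$, after first establishing two elementary equivariance facts for the building blocks of $\mathbf{B}_m$. The first is that the coordinatewise activations $\sigma_1(\mathbf{z})=\text{ReLU}(\mathbf{z})$ and $\sigma_2(\mathbf{z})=\text{ReLU}(-\mathbf{z})$ commute with any permutation matrix, i.e.\ $\Pi\sigma_k(\mathbf{z})=\sigma_k(\Pi\mathbf{z})$; this is immediate since relabeling coordinates commutes with any scalar nonlinearity applied entrywise.

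The main content is the second fact: each filter $F_j$ is covariant under $\Pi$ in the sense that $F_j^\pi = \Pi F_j \Pi^T$, where $F_j^\pi$ denotes the filter constructed from the permuted graph (with the permuted weight vector $\Pi\mathbf{w}$). To verify this, I would track each ingredient of the filter construction from Section \ref{sec: diffusion matrices}. Under relabeling, $A^\pi=\Pi A\Pi^T$ and $D^\pi=\Pi D\Pi^T$, and hence $L_N^\pi=\Pi L_N\Pi^T$; meanwhile the weighting matrix transforms as $W^\pi = \text{diag}(\Pi\mathbf{w}) = \Pi W \Pi^T$, since conjugating a diagonal matrix by a permutation just permutes its entries. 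Using $\Pi^T\Pi=I$ and the identity $p(\Pi M\Pi^T)=\Pi p(M)\Pi^T$ for any polynomial $p$ (and, via the functional calculus in \eqref{eqn: spectral calculus}, for $g$ and the $q_j$ as well), a short computation gives $K^\pi = (W^\pi)^{-1}g(L_N^\pi)W^\pi = \Pi K\Pi^T$, and then $F_j^\pi=q_j(K^\pi)=\Pi F_j\Pi^T$, equivalently $F_j^\pi\Pi=\Pi F_j$.

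With these two facts in hand the induction on $m$ is routine. For the base case $m=1$,
$$\Pi B[j_1,k_1]\mathbf{x}=\Pi\sigma_{k_1}(F_{j_1}\mathbf{x})=\sigma_{k_1}(\Pi F_{j_1}\mathbf{x})=\sigma_{k_1}(F_{j_1}^\pi\Pi\mathbf{x})=B^\pi[j_1,k_1]\Pi\mathbf{x},$$
applying activation equivariance and filter covariance in turn, where $B^\pi$ denotes the BLIS operator built from the permuted graph. For the inductive step, writing $\mathbf{y}=B[j_1,k_1,\ldots,j_{m-1},k_{m-1}]\mathbf{x}$, the induction hypothesis identifies $\Pi\mathbf{y}$ with $B^\pi[j_1,k_1,\ldots,j_{m-1},k_{m-1}]\Pi\mathbf{x}$, after which one more application of the single-layer argument closes the induction.

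The proof is essentially structural bookkeeping; the only place where care is required is in handling the weight vector $\mathbf{w}$, since $K$ involves conjugation by the diagonal $W$. The theorem statement is careful to note that the right-hand side uses $\Pi\mathbf{w}$ precisely so that the covariance identity $K^\pi=\Pi K\Pi^T$ holds, and beyond this I do not foresee any genuine obstacle.
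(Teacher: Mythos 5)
Your proposal is correct and follows essentially the same route as the paper: establish covariance of each structural ingredient ($A$, $D$, $L_N$, hence $T$ and the filters) under conjugation by $\Pi$, note that the entrywise activations commute with permutations, and compose. If anything, you are slightly more careful than the paper's own write-up, which works with $T$ and does not explicitly carry the conjugation through $K=W^{-1}TW$; your observation that $\mathrm{diag}(\Pi\mathbf{w})=\Pi W\Pi^T$ is exactly the step needed to close that gap, and your explicit induction on $m$ is just a more formal packaging of the paper's direct composition argument.
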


In our aggregation module (discussed below in Section \ref{sec: BLISnet}), we will perform a global summation over the vertices. In light of Theorem \ref{thm: equivariance}, we will be summing the same terms on both the original and the permuted graph, just in a different order. Therefore, the output of the aggregation module will be the same for both graphs. Thus, the BLIS module produces an equivariant representation of the signal from which the  aggregation module extracts invariance.

Previous work has shown that the infinite-depth geometric scattering transform preserves the norm of the input. For example, Theorem 3.5 of \citet{perlmutter2023understanding} shows that if the scattering transform is constructed using $\mathcal{W}_J^{(1)}$,  then  
\begin{equation*}
\sum_{m=0}^\infty\sum_{0\leq j_i\leq J} \|S_J[j_1,\ldots,j_m]\mathbf{x}\|_\mathbf{w}^2=\|\mathbf{x}\|_\mathbf{w}^2.
\end{equation*}

We may derive an analogous result for BLIS. However, our theorem differs from previous work in that it shows that the energy of the input signal is preserved in each layer (whereas previous work showed energy was conserved when summing over all layers). Indeed, this result helps motivate the third modification discussed at the beginning of Section \ref{sec: BLIS}, where we implement BLIS without skip connections, unlike  scattering.
\begin{theorem}\label{prop: U nonexpansive}
    For all $\mathbf{x}\in\mathbb{R}^n$, we have  $$c^{m}\norm{\mathbf{x}}_{\mathbf{w}}^2 \leq \norm{\mathbf{B}_{m}(\mathbf{x})}_{\mathbf{w},2}^2 \leq C^{m}\norm{\mathbf{x}}_{\mathbf{w}}^2,$$
    where $0<c\leq C<\infty$ are the  frame bounds for the wavelets defined as in \eqref{eqn: Frame condition}.
    In particular, if $c=C=1,$ as is the case for $\mathcal{W}^{(1)}_J$, we have $\norm{\mathbf{B}_{m}(\mathbf{x})}_{\mathbf{w},2}=\|\mathbf{x}\|_{\mathbf{w}}$.
\end{theorem}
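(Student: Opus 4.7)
The plan is to prove Theorem \ref{prop: U nonexpansive} by induction on the depth $m$, leveraging two elementary facts. The first is the frame property \eqref{eqn: Frame condition}, which gives $c\|\mathbf{z}\|_{\mathbf{w}}^2\leq\sum_{j=0}^{J+1}\|F_j\mathbf{z}\|_{\mathbf{w}}^2\leq C\|\mathbf{z}\|_{\mathbf{w}}^2$ for every $\mathbf{z}\in\mathbb{R}^n$. The second is the pointwise identity $\sigma_1(z_i)^2+\sigma_2(z_i)^2=z_i^2$, which is valid since exactly one of $\sigma_1(z_i),\sigma_2(z_i)$ vanishes and the other equals $|z_i|$; weighting by $w_i$ and summing yields the \emph{exact} energy decomposition
$$\|\sigma_1(\mathbf{z})\|_{\mathbf{w}}^2+\|\sigma_2(\mathbf{z})\|_{\mathbf{w}}^2=\|\mathbf{z}\|_{\mathbf{w}}^2.$$
This identity is the main reason the lower constant here is $c^m$ rather than the looser $(c/2)^m$ appearing in Theorem \ref{prop: bi-Lipschitz}: because $\sigma_1$ and $\sigma_2$ have disjoint supports, no factor of $1/2$ is lost when bounding a single term (as opposed to a difference).

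For the base case $m=1$, swapping the order of summation and applying the energy identity for each fixed $j$ gives
$$\|\mathbf{B}_1(\mathbf{x})\|_{\mathbf{w},2}^2=\sum_{j=0}^{J+1}\sum_{k=1}^{2}\|\sigma_k(F_j\mathbf{x})\|_{\mathbf{w}}^2=\sum_{j=0}^{J+1}\|F_j\mathbf{x}\|_{\mathbf{w}}^2,$$
which lies in $[c\|\mathbf{x}\|_{\mathbf{w}}^2,\,C\|\mathbf{x}\|_{\mathbf{w}}^2]$ by the frame bounds.

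For the inductive step, fix a multi-index $(j_1,k_1,\ldots,j_{m-1},k_{m-1})$ and set $\mathbf{z}=B[j_1,k_1,\ldots,j_{m-1},k_{m-1}]\mathbf{x}$. Applying the base-case calculation with $\mathbf{z}$ in place of $\mathbf{x}$ yields $c\|\mathbf{z}\|_{\mathbf{w}}^2\leq\sum_{j_m,k_m}\|\sigma_{k_m}(F_{j_m}\mathbf{z})\|_{\mathbf{w}}^2\leq C\|\mathbf{z}\|_{\mathbf{w}}^2$. Summing this inequality over all $(j_1,k_1,\ldots,j_{m-1},k_{m-1})$ produces
$$c\,\|\mathbf{B}_{m-1}(\mathbf{x})\|_{\mathbf{w},2}^2\leq\|\mathbf{B}_m(\mathbf{x})\|_{\mathbf{w},2}^2\leq C\,\|\mathbf{B}_{m-1}(\mathbf{x})\|_{\mathbf{w},2}^2,$$
and the inductive hypothesis closes the argument. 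The $c=C=1$ specialization, which by Proposition \ref{prop: frame} applies to $\mathcal{W}^{(1)}_J$, then immediately yields the claimed isometry. There is no serious obstacle here; the only care required is in the multi-index bookkeeping, making sure to apply the frame bound \emph{inside} the iterated sum at each depth rather than attempting to analyze the wavelets and activations in isolation.
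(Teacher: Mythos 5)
Your proposal is correct and follows essentially the same route as the paper's proof: induction on $m$, with the base case resting on the exact energy identity $\|\sigma_1(\mathbf{z})\|_{\mathbf{w}}^2+\|\sigma_2(\mathbf{z})\|_{\mathbf{w}}^2=\|\mathbf{z}\|_{\mathbf{w}}^2$ (the paper's Lemma \ref{lem: sigmas nonexpansive}) combined with the frame bounds, and the inductive step applying the one-layer estimate to each $B[j_1,k_1,\ldots,j_{m-1},k_{m-1}]\mathbf{x}$ before summing over multi-indices. Your remark explaining why the lower constant is $c^m$ rather than $(c/2)^m$ is also accurate and matches the structural difference between this result and Theorem \ref{prop: bi-Lipschitz}.
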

\noindent For a proof of Theorem \ref{prop: U nonexpansive}, please see Appendix \ref{prf: U nonexpansive}.

\subsection{BLIS-Net Architecture}\label{sec: BLISnet}
The  bi-Lipschitz Scattering Network (BLIS-Net) integrates the BLIS module with several other modules for ML purposes:

\begin{enumerate}
    \item \textbf{BLIS module layer}: The first layer of our network utilizes the BLIS module to extract features from the input graph signal.
    \item \textbf{Moment aggregation module}: %We next perform moment aggregation of the BLIS features across the nodes. 
For  each $j_1,k_1,\ldots,j_m,k_m$, we aggregate of the BLIS features across the nodes, i.e., $$B'[j_{1},k_{1},\cdots,j_{m},k_{m}]\mathbf{x_i}=\sum_{v\in V} B'[j_{1},k_{1},\cdots,j_{m},k_{m}](\mathbf{x_i})(v).$$ %and take the collection of all $B'[j_{1},k_{1},\cdots,j_{m},k_{m}]\mathbf{x_i}$ to as a hidden representation of  $\mathbf{x_i}$. 
    \item \textbf{Embedding layer}:
    To reduce the risk of overfitting and increase computational efficiency, we next perform a dimensionality reduction via an embedding layer. 
    \item \textbf{Classification layer}: Finally we include an MLP classifier featuring softmax activation. 
\end{enumerate}

We note that although the focus of this paper is signal classification, other common machine learning tasks such as clustering and regression have natural analogs in the signal setting. Our method can be flexibly adapted to these tasks due to its modular design. For example, one could train a clustering algorithm on top of the output of the BLIS module. \begin{table}[]
\centering
\begin{tabular}{@{}l|cc@{}}
\toprule
Synthetic     & Different $\mu$ & Same $\mu$     \\ \midrule
GCN           & $99.0 \pm 0.4$  & $91.7 \pm 2.0$ \\
GAT           & $99.2 \pm 0.5$  & $91.6 \pm 2.0$ \\
GIN           & \second{$\mathbf{99.5 \pm 0.2}$}  & $91.3 \pm 1.4$ \\
GPS           & $95.4 \pm 5.9$  & \second{$\mathbf{97.7 \pm 0.9}$} \\
\midrule
%Scattering (W1) & $92.8 \pm 3.4$  & $97.1 \pm 1.3$ \\
%Scattering (W2) & $81.3 \pm 7.3$  & $96.4 \pm 1.3$ \\
%\midrule
Scattering (W1) & $97.7 \pm 1.0$ & $96.5 \pm 1.2$ \\ 
Scattering (W2) & $88.3 \pm 4.3$ & $96.8 \pm 1.0$ \\ 
\midrule
BLIS-Net (W1)       & \best{$\mathbf{100.0 \pm 0.0}$} & \second{$\mathbf{97.7 \pm 0.5}$} \\
BLIS-Net (W2)       & \second{$\mathbf{99.5 \pm 0.3}$}  & \best{$\mathbf{98.6 \pm 0.4}$} \\
\bottomrule
\end{tabular}
\label{tab: synthetic results main}
\caption{Accuracy on the synthetic data sets. }%\best{Best} and \second{second} best (tie) results are colored.}
\end{table}

\section{Experimental  Results
}
We demonstrate the utility of BLIS-Net (with both  $\mathcal{W}^{(1)}_J$ and $\mathcal{W}^{(2)}_J$) on synthetic and real-world data sets. %and also on synthetic data sets designed to illustrate the advantages of BLIS-Net over geometric scattering established in Theorems \ref{thm: not injective new} and \ref{prop: bi-Lipschitz}. 
As baselines, we use several widely adopted graph neural networks based on the message-passing framework: the Graph Convolutional Network (GCN) \citep{kipf2016semi}, Graph Attention Network (GAT)  \citep{velivckovic2017graph}, and the Graph Isomorphism Network (GIN) \citep{xu2018how}. We  also consider the general, powerful, scalable (GPS) graph transformer \citep{rampavsek2022recipe} which has achieved state-of-the-art performance on a wide range of benchmarks. We note that, unlike message-passing GNNs, the GPS allows information to spread across the graph via full connectivity, thus allowing the network to capture global properties of the signal. In our tables, we color the \best{top-performing} and \second{second-best} method. Further details on model implementation, computational complexity, data sets, hyperparameters, and training procedures are provided in the appendix.

\subsection{Synthetic data}\label{sec: experiments synthetic}
We first generate $2N$ random functions $f^{(1)}_1,\ldots, f^{(1)}_N$ and $f^{(2)}_1,\ldots, f^{(2)}_N$ defined on $[0,1]^2$ from two different  distributions. We then define graph signals  $\mathbf{x_k^{(j)}}$ with $(x_k^{(j)})_i=f^{(j)}_k(v_i)$ where the vertices $v_1,\ldots, v_n$ are chosen-uniformly at random from $[0,1]^2$ and connected to their $k$-nearest neighbors. 
Our goal is to predict which  distribution the signal was generated from.

 In particular, we consider two families of functions $$f_j^{(1)}=g_{\mu_1,\sigma_1}+g_{\mu_2,\sigma_2},\quad  f^{(2)}_j=g_{\mu_1,\sigma_1}-g_{\mu_2,\sigma_2},$$ where $g_{\mu,\sigma}(x)\coloneqq \exp\left(-\frac{\|x-\mu^{j}_1\|^2_2}{2(\sigma_1^j)^2}\right)$ is a Gaussian function with center $\mu$ and bandwidth $\sigma$. In our first set of experiments, we generate values of $\mu^j_1$ and $\mu^j_2$ uniformly at random from $[0,1]^2$, set $\sigma^j_1=\sigma^j_2=\sigma^j$, where $\sigma_j$ is chosen uniformly from $[0,1]$. Our second setup is similar, but with $\mu_1^j=\mu_2^j,$ $\sigma_2=\sigma_1/2$.

The first setting, $\mu_1\neq \mu_2$,  results in signals modeled after those used in the proof of Theorem \ref{thm: not injective new} with support concentrated near two, possibly far away, points. Therefore, we unsurprisingly observe that BLIS, as well as GAT, GIN, GCN, and GPS, perform well on this task whereas  scattering is the least accurate method, likely due to its use of the absolute value.\footnote{ Code needed to reproduce our experiments is 
available at~\github. Experiments were performed on a computing cluster with 8 CPUs and 4 NVIDIA RTX 5000 GPUs.}

In our second setting, $\mu_1=\mu_2=\mu$ and $\sigma_2=\sigma_1/2$,
 we  view the Gaussians as two signals interfering with each other. %In particular, we notice the $\tilde{f}^{(2)}_j$ has a dip in the middle giving it a camel-like shape. 
Unlike the first setup, the absolute value does not severely limit the ability of the scattering transform to distinguish the signal classes. Indeed, the primary difference between these two signal classes is an oscillatory pattern in the middle of the signals' support. We observe that the two wavelet-based methods (Scattering and BLIS) are well equipped to capture this signal oscillation and both outperform GIN, GAT, and GCN  (with BLIS outperforming scattering due to its increased expressive power). The utility of wavelets for this task is visualized in Figure \ref{fig:filter response}, where we show that the two signal classes have markedly different responses to wavelet filters, but compartively similar responses to the low-pass filter used in GCN.
Further details on our experimental setup as well as ablation studies, where we also consider the geometric scattering transform and the BLIS module paired with shallow classifiers, e.g., SVM, are presented in Appendix \ref{app: ablation}. 

\begin{figure}[htbp]
    \centering
    \includegraphics[width=0.7\linewidth]{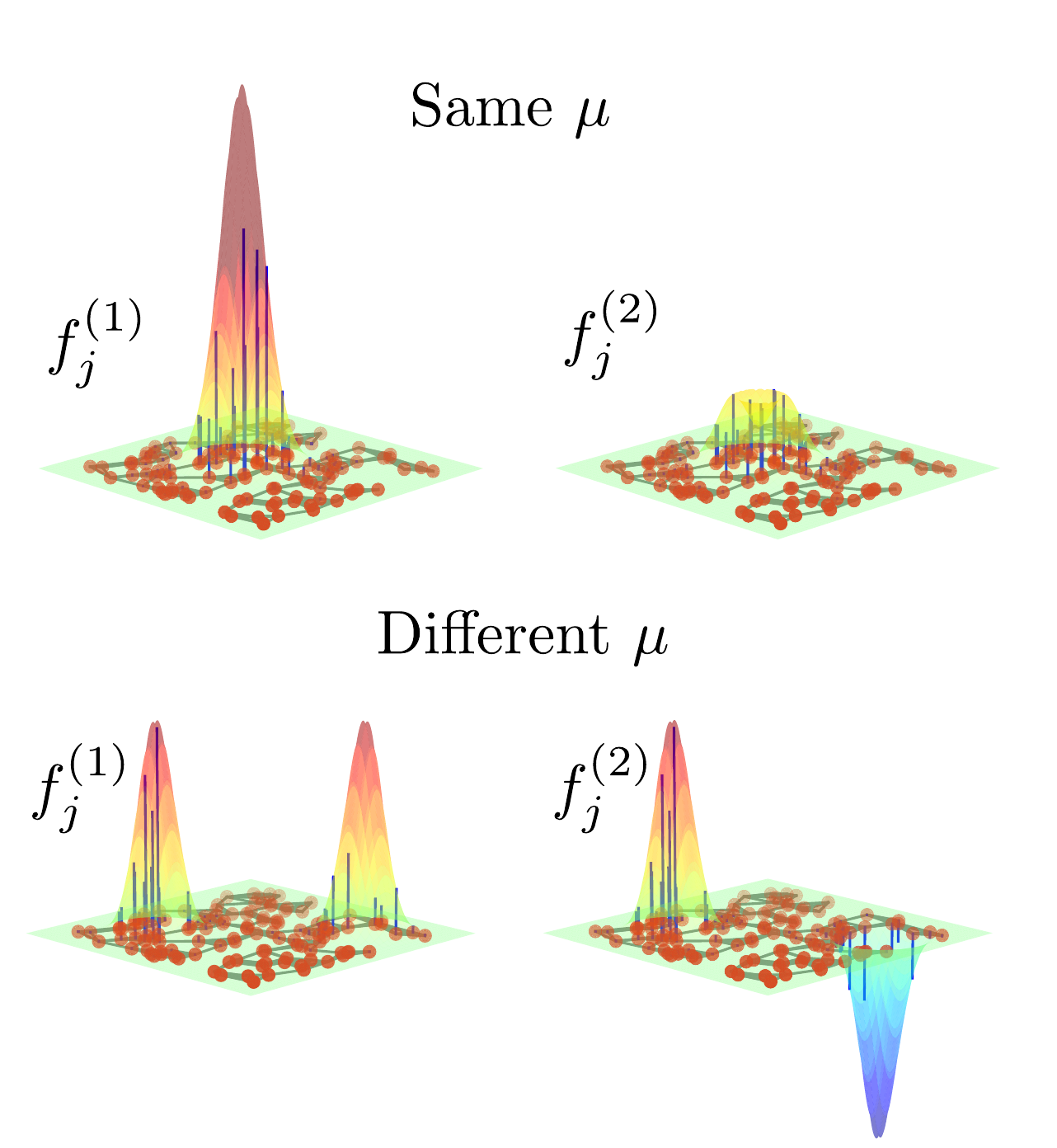}
    \caption{Synthetic signals $f^{(1)}_j$ and $f^{(2)}_j$.}
    \label{fig:enter-label}
\end{figure}

\begin{figure}[htbp]
    \centering
    \includegraphics[width=.7\linewidth]{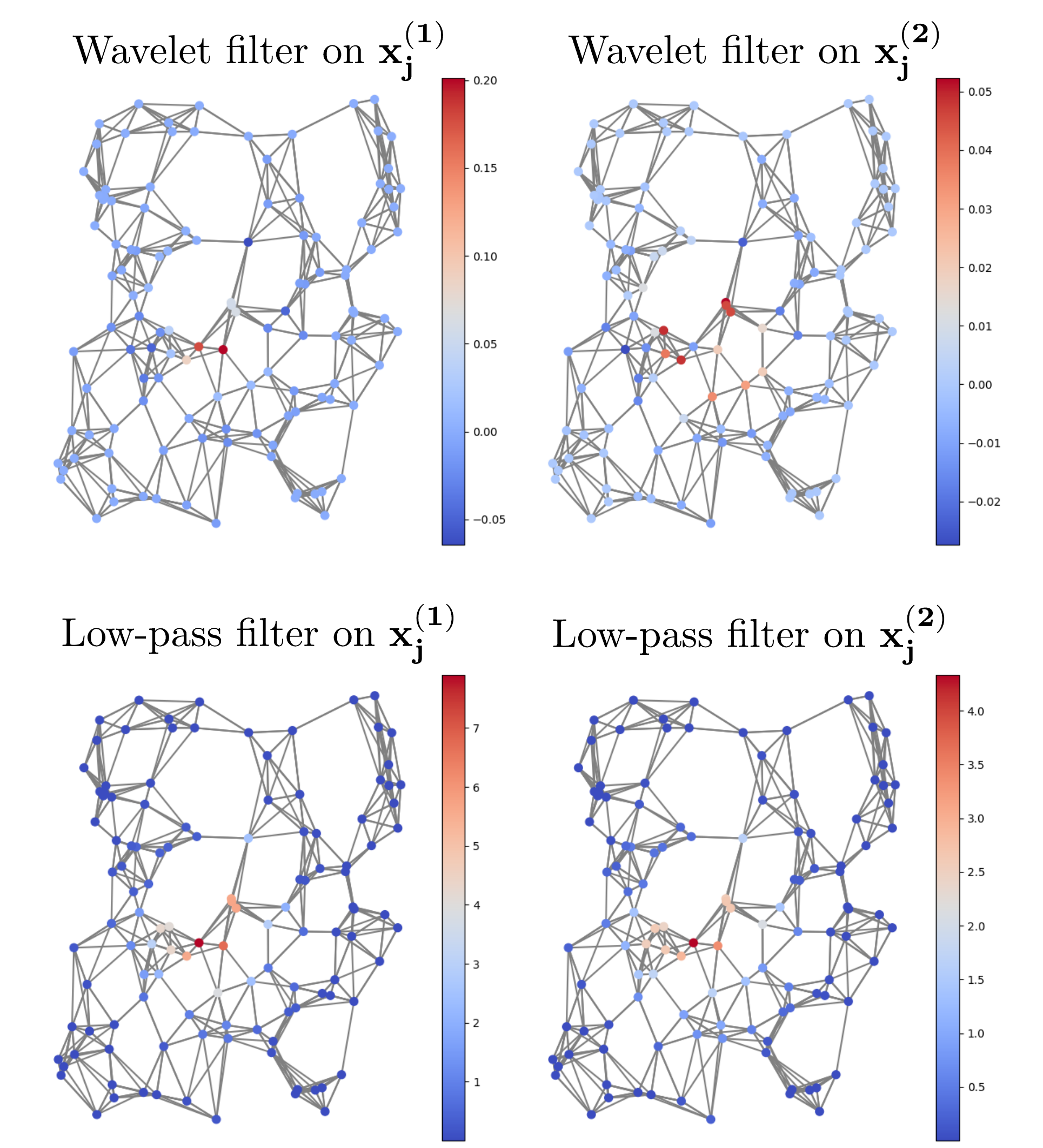}
    \caption{Filter responses on $\mathbf{x_j^{(1)}}$ and $\mathbf{x_j^{(2)}}$.}
    \label{fig:filter response}
\end{figure}
\subsection{Caltrans Traffic data}\label{sec: traffic data set}

\begin{table}[]
\centering
\begin{adjustbox}{width=0.7\linewidth}
\begin{tabular}{l|ccc}
\toprule
PEMS03        & HOUR            & DAY            & WEEK           \\ \hline
GCN           & $27.8 \pm 2.0$  & $14.1 \pm 0.1$ & $30.8 \pm 0.5$ \\
GAT           & $27.4 \pm 1.6$  & $14.1 \pm 0.3$ & $30.8 \pm 0.5$ \\
GIN           & $14.0 \pm 12.4$ & $14.3 \pm 0.4$ & $30.8 \pm 0.5$ \\ 
GPS           & $57.4 \pm 0.1$ & $49.6 \pm 0.3$ & $31.9 \pm 0.4$ \\
\midrule
Scattering (W1) & $58.2 \pm 0.8$ & $45.6 \pm 0.7$ & $46.4 \pm 1.3$ \\ 
Scattering (W2) & $60.4 \pm 0.5$ & $49.5 \pm 0.9$ & $51.4 \pm 1.0$ \\
\midrule
BLIS-Net (W1)      & \second{$\mathbf{63.1 \pm 2.2}$}  & \second{$\mathbf{53.1 \pm 1.3}$} & \second{$\mathbf{54.8 \pm 1.8}$} \\
BLIS-Net (W2)       & \best{$\mathbf{68.3 \pm 2.1}$}  & \best{$\mathbf{56.3 \pm 0.0}$} & \best{$\mathbf{61.7 \pm 2.6}$} \\
\bottomrule
\end{tabular}
\end{adjustbox}
\caption{Accuracy on the PEMS03 traffic data set. %\best{Best} and \second{second} best results are colored.}
}
\label{tab:pems03}
\end{table}

\begin{table}[h!]
\centering
\begin{adjustbox}{width=0.7\linewidth}
\begin{tabular}{l|ccc}
\toprule
PEMS07        & HOUR            & DAY            & WEEK           \\ \hline
GCN           & $27.4 \pm 2.0$  & $14.6 \pm 0.6$ & $28.5 \pm 0.5$ \\
GAT           & $26.8 \pm 4.1$  & $14.6 \pm 0.6$ & $28.6 \pm 0.5$ \\
GIN           & $14.3 \pm 12.6$ & $15.8 \pm 0.8$ & $28.4 \pm 0.6$ \\
GPS           &  $39.9 \pm 2.7$ &  $27.7 \pm 1.9$ & $30.4 \pm 0.6$ \\
\midrule
Scattering (W1) & $54.0 \pm 0.6$ & $53.3 \pm 0.9$ & $56.9 \pm 1.3$ \\ 
Scattering (W2) & $54.3 \pm 0.7$ & $55.2 \pm 1.2$ & $61.6 \pm 1.0$ \\
\midrule
BLIS-Net (W1)       &  \best{$\mathbf{63.5 \pm 1.1}$}  & \best{$\mathbf{72.9 \pm 1.5}$} & \second{$\mathbf{76.8 \pm 2.0}$} \\
BLIS-Net (W2)       & \second{$\mathbf{63.4 \pm 2.1}$}  & \second{$\mathbf{71.0 \pm 2.4}$} & \best{$\mathbf{77.3 \pm 1.6}$} \\
\bottomrule
\end{tabular}
\end{adjustbox}
\caption{Accuracy on the PEMS07 traffic data set. %\best{Best}, \second{second}, and \third{third} best results are colored.}
}
\label{tab:pems07}
\end{table}

Here we consider  data consisting  of highway traffic measurements collected by the Caltrans Performance Measurement System (PeMS) \citep{chen2001freeway}, where over 39,000 sensors are deployed across California highways and data are aggregated every five minutes.
 PeMS03 and PeMS07  consist of traffic data from California's 3rd and 7th congressional districts and provide two months of consecutive traffic data collected between 2016 and 2018, depending on the data set\footnote{Additional experiments on data from the 4th and 8th district are provided in the appendix along with additional details on our experimental setup.}. We aim to predict the hour of the day (24 classes), the day of the week (7 classes), and the week of the month (4 classes) that each traffic observation corresponds to. %Further details are provided in the supplementary material. %Since traffic exhibits temporal patterns, this task provides a simple and useful test of the quality of a graph signal classification technique.
On both PEMS03 and PEMS07, we observe that the message-passing-based methods (GCN, GAT, and GIN) perform poorly. Both wavelet-based methods (Scattering and BLIS-Net) perform well, with BLIS-Net outperforming Scattering perhaps because of  its improved expressive power. The GPS graph transformer performs better than the message-passing based methods but less well than the wavelet-based methods, particularly when attempting to predict the week.

\subsection{Partly Cloudy fMRI data set}

%The human brain, with its intricate network of connections and interactions, recruits distinct brain regions for different tasks. Functional magnetic resonance imaging (fMRI) is a neuroimaging technique that has been useful for understanding the functional organization of the brain. 
Modeling functional magnetic resonance imaging (fMRI) data as a graph is a useful computational approach for analyzing brain signals; the nodes are brain regions of interest (ROIs) whose connections can be defined in multiple ways. The fMRI data provides graph signals in the form of a blood oxygenation level dependent (BOLD) signal across the ROIs. 

Here, we utilize a data set collected from participants who were shown Disney Pixar's "Partly Cloudy" in \citet{richardson2018development}. %``MRI data of 3-12 year old children and adults during viewing of a short animated film'' 
 39 ROIs were extracted from the fMRI data, and the graph connectivity was created using a $k$-nearest neighbors graph based on the centroids of the ROIs.  %Using spatial connectivity to create an underlying graph for this data set was also considered in \citet{rieck2020uncovering}, but our approach differs in the step of first extracting ROIs.
%We found that applying Gaussian smoothing on the time series at each ROI was a crucial data preprocessing step for all methods.
We consider the problem of using the fMRI data to classify the emotional state of the animated film, delineating the frames into three classes, positive, negative, and neutral emotions. Similarly to \citet{busch2023multi}, we applying temporal smoothing at each node \footnote{Results without smoothing are in Appendix \ref{app: ablation}.}. BLIS-Net outperforms all other methods. As with the PeMS data sets, scattering is the second best performing method followed by the GPS graph transformer, underscoring the value of capturing global information as well as the full frequency 
spectrum of the input signal.

%This is a challenging task since data consists of a large amount of noise (and also likely label noise). All methods achieve between 37\% and 42\% accuracy. The graph transformer GPS is the top method outperforming the second best (BLIS-Net) by half a percentage point.

\begin{table}[]
\centering
\begin{tabular}{@{}l|c@{}}
\toprule
Partly Cloudy & Emotion classification \\ \midrule
GCN           & $39.3 \pm 5.9$         \\
GAT           & $39.3 \pm 6.0$        \\
GIN           & $42.1 \pm 6.0$        \\ 
GPS           & $56.4 \pm 4.3$ \\
\midrule
Scattering (W1) & $60.6 \pm 4.9$ \\ 
Scattering (W2) & $62.3 \pm 5.1$ \\ 
\midrule
BLIS-Net (W1)       & \second{$\mathbf{67.1 \pm 4.3}$}        \\
BLIS-Net (W2)       & \best{$\mathbf{68.3 \pm 3.6}$}         \\
\bottomrule
\end{tabular}
\caption{ Accuracy on Partly Cloudy fMRI data.  }
\end{table}

\section{Conclusion and Future Work}

We have introduced BLIS-Net, a network for processing graph signals. The key piece of our architecture is the BLIS module, which modifies the geometric scattering transform in several ways in order to provably increase its expressive power for signal classification. We then show that BLIS-Net achieves superior to performance to both the original geometric scattering transform and other GNNs on both real and synthetic data.

There are also several natural avenues of future work. As alluded to in the intro, \citet{bodmann2022scattering} used a statistical analysis of the geometric scattering coefficients to detect anomalous traffic patterns. It is likely that similar tools can be used in conjunction with BLIS to detect anomalous signals on graphs such as traffic networks and brain-scan networks. Additionally, we note that many of our data sets have an implicit temporal structure in addition the graph structure. Therefore, developing a space-time version of BLIS, perhaps using techniques inspired by \cite{PanCO21}, would be a natural future direction. Lastly, we note that \cite{wenkel2022overcoming}  constructs a hybrid network which combines aspects of with more standard GCN-style networks and utilizes a localized attention mechanism to balance the two. We view hybridizations similar to this, with BLIS in place of the geometric scattering transform, as a potential avenue for improved numerical performance in future work.

 \bibliography{main}

\begin{thebibliography}{}

\bibitem[Antonello and Verhaegen, 2015]{antonello2015modal}
Antonello, J. and Verhaegen, M. (2015).
\newblock Modal-based phase retrieval for adaptive optics.
\newblock {\em JOSA A}, 32(6):1160--1170.

\bibitem[Balan et~al., 2006]{balan2006signal}
Balan, R., Casazza, P., and Edidin, D. (2006).
\newblock On signal reconstruction without phase.
\newblock {\em Applied and Computational Harmonic Analysis}, 20(3):345--356.

\bibitem[Bandeira et~al., 2014]{bandeira2014saving}
Bandeira, A.~S., Cahill, J., Mixon, D.~G., and Nelson, A.~A. (2014).
\newblock Saving phase: Injectivity and stability for phase retrieval.
\newblock {\em Applied and Computational Harmonic Analysis}, 37(1):106--125.

\bibitem[Bhaskar et~al., 2022]{BhaskarGCPK22}
Bhaskar, D., Grady, J.~D., Castro, E., Perlmutter, M., and Krishnaswamy, S.
  (2022).
\newblock Molecular graph generation via geometric scattering.
\newblock In {\em 32nd {IEEE} International Workshop on Machine Learning for
  Signal Processing, {MLSP} 2022, Xi'an, China, August 22-25, 2022}, pages
  1--6. {IEEE}.

\bibitem[Bodmann and Emilsdottir, 2022]{bodmann2022scattering}
Bodmann, B.~G. and Emilsdottir, I. (2022).
\newblock A scattering transform for graphs based on heat semigroups, with an
  application for the detection of anomalies in positive time series with
  underlying periodicities.
\newblock {\em arXiv:2208.12773}.

\bibitem[Bruna and Mallat, 2019]{bruna2019multiscale}
Bruna, J. and Mallat, S. (2019).
\newblock Multiscale sparse microcanonical models.
\newblock {\em Mathematical Statistics and Learning}, 1(3):257--315.

\bibitem[Busch et~al., 2023]{busch2023multi}
Busch, E.~L., Huang, J., Benz, A., Wallenstein, T., Lajoie, G., Wolf, G.,
  Krishnaswamy, S., and Turk-Browne, N.~B. (2023).
\newblock Multi-view manifold learning of human brain-state trajectories.
\newblock {\em Nature Computational Science}, 3(3):240--253.

\bibitem[Cahill et~al., 2016]{cahill2016phase}
Cahill, J., Casazza, P., and Daubechies, I. (2016).
\newblock Phase retrieval in infinite-dimensional hilbert spaces.
\newblock {\em Transactions of the American Mathematical Society, Series B},
  3(3):63--76.

\bibitem[Castro et~al., 2020]{Castro2020}
Castro, E., Benz, A., Tong, A., Wolf, G., and Krishnaswamy, S. (2020).
\newblock {U}ncovering the {F}olding {L}andscape of {RNA} {S}econdary
  {S}tructure {U}sing {D}eep {G}raph {E}mbeddings.
\newblock In {\em 2020 IEEE International Conference on Big Data (Big Data)},
  pages 4519--4528.

\bibitem[Chen et~al., 2001]{chen2001freeway}
Chen, C., Petty, K., Skabardonis, A., Varaiya, P., and Jia, Z. (2001).
\newblock Freeway performance measurement system: mining loop detector data.
\newblock {\em Transportation Research Record}, 1748(1):96--102.

\bibitem[Cheng et~al., 2021]{cheng2021stable}
Cheng, C., Daubechies, I., Dym, N., and Lu, J. (2021).
\newblock Stable phase retrieval from locally stable and conditionally
  connected measurements.
\newblock {\em Applied and Computational Harmonic Analysis}, 55:440--465.

\bibitem[Chew et~al., 2022]{Chew2022measure}
Chew, J., Hirn, M.~J., Krishnaswamy, S., Needell, D., Perlmutter, M., Steach,
  H.~R., Viswanath, S., and Wu, H. (2022).
\newblock Geometric scattering on measure spaces.
\newblock arXiv:2208.08561.

\bibitem[Chung, 1997]{chung_1997}
Chung, F.~R. (1997).
\newblock {\em Spectral graph theory}, volume~92.
\newblock American Mathematical Soc.

\bibitem[Coifman and Wickerhauser, 1992]{coifman1992entropy}
Coifman, R.~R. and Wickerhauser, M.~V. (1992).
\newblock Entropy-based algorithms for best basis selection.
\newblock {\em IEEE Transactions on information theory}, 38(2):713--718.

\bibitem[Errica et~al., 2019]{errica2019fair}
Errica, F., Podda, M., Bacciu, D., and Micheli, A. (2019).
\newblock A fair comparison of graph neural networks for graph classification.
\newblock {\em arXiv preprint arXiv:1912.09893}.

\bibitem[Fienup and Dainty, 1987]{fienup1987phase}
Fienup, C. and Dainty, J. (1987).
\newblock Phase retrieval and image reconstruction for astronomy.
\newblock {\em Image recovery: theory and application}, 231:275.

\bibitem[Gama et~al., 2019a]{gama:stabilityGraphScat2019}
Gama, F., Bruna, J., and Ribeiro, A. (2019a).
\newblock Stability of graph scattering transforms.
\newblock In {\em Advances in Neural Information Processing Systems 33}.

\bibitem[Gama et~al., 2019b]{gama:diffScatGraphs2018}
Gama, F., Ribeiro, A., and Bruna, J. (2019b).
\newblock Diffusion scattering transforms on graphs.
\newblock In {\em International Conference on Learning Representations}.

\bibitem[Gao et~al., 2019]{gao:graphScat2018}
Gao, F., Wolf, G., and Hirn, M. (2019).
\newblock Geometric scattering for graph data analysis.
\newblock In {\em Proceedings of the 36th International Conference on Machine
  Learning, PMLR}, volume~97, pages 2122--2131.

\bibitem[Guo et~al., 2019]{guo2019attention}
Guo, S., Lin, Y., Feng, N., Song, C., and Wan, H. (2019).
\newblock Attention based spatial-temporal graph convolutional networks for
  traffic flow forecasting.
\newblock In {\em Proceedings of the AAAI conference on artificial
  intelligence}, volume 33 No. 01, pages 922--929.

\bibitem[Hu et~al., 2020]{hu2019strategies}
Hu, W., Liu, B., Gomes, J., Zitnik, M., Liang, P., Pande, V.~S., and Leskovec,
  J. (2020).
\newblock Strategies for pre-training graph neural networks.
\newblock In {\em 8th International Conference on Learning Representations,
  {ICLR} 2020, Addis Ababa, Ethiopia, April 26-30, 2020}. OpenReview.net.

\bibitem[Iwen et~al., 2019]{iwen2019lower}
Iwen, M.~A., Merhi, S., and Perlmutter, M. (2019).
\newblock Lower lipschitz bounds for phase retrieval from locally supported
  measurements.
\newblock {\em Applied and Computational Harmonic Analysis}, 47(2):526--538.

\bibitem[Kipf and Welling, 2016]{kipf2016semi}
Kipf, T. and Welling, M. (2016).
\newblock Semi-supervised classification with graph convolutional networks.
\newblock {\em Proc. of ICLR}.

\bibitem[Li et~al., 2021]{li2021braingnn}
Li, X., Zhou, Y., Dvornek, N., Zhang, M., Gao, S., Zhuang, J., Scheinost, D.,
  Staib, L.~H., Ventola, P., and Duncan, J.~S. (2021).
\newblock Brain{GNN}: Interpretable brain graph neural network for f{MRI}
  analysis.
\newblock {\em Medical Image Analysis}, 74:102233.

\bibitem[Liu et~al., 2012]{liu2012phase}
Liu, Z.-C., Xu, R., and Dong, Y.-H. (2012).
\newblock Phase retrieval in protein crystallography.
\newblock {\em Acta Crystallographica Section A: Foundations of
  Crystallography}, 68(2):256--265.

\bibitem[Mallat, 2012]{mallat:scattering2012}
Mallat, S. (2012).
\newblock Group invariant scattering.
\newblock {\em Communications on Pure and Applied Mathematics},
  65(10):1331--1398.

\bibitem[Mallat and Waldspurger, 2015]{mallat:waveletPhaseRetrieval2015}
Mallat, S. and Waldspurger, I. (2015).
\newblock Phase retrieval for the {C}auchy wavelet transform.
\newblock {\em Journal of Fourier Analysis and Applications}, 21(6):1251--1309.

\bibitem[M{\'e}noret et~al., 2017]{menoret2017evaluating}
M{\'e}noret, M., Farrugia, N., Pasdeloup, B., and Gripon, V. (2017).
\newblock Evaluating graph signal processing for neuroimaging through
  classification and dimensionality reduction.
\newblock In {\em 2017 IEEE Global Conference on Signal and Information
  Processing (GlobalSIP)}, pages 618--622. IEEE.

\bibitem[Min et~al., 2022]{min2022can}
Min, Y., Wenkel, F., Perlmutter, M., and Wolf, G. (2022).
\newblock Can hybrid geometric scattering networks help solve the maximal
  clique problem?
\newblock In {\em Neural Information Processing Symposium (NeurIPS)}.

\bibitem[Moon et~al., 2019]{moon:PHATE2017}
Moon, K.~R., van Dijk, D., Wang, Z., Gigante, S., Burkhardt, D., Chen, W.~S.,
  Yim, K., van~den Elzen, A., Hirn, M.~J., Coifman, R.~R., Ivanova, N.~B.,
  Wolf, G., and Krishnaswamy, S. (2019).
\newblock Visualizing structure and transitions for biological data
  exploration.
\newblock {\em Nature BioTechnology}.

\bibitem[Pan et~al., 2021]{PanCO21}
Pan, C., Chen, S., and Ortega, A. (2021).
\newblock Spatio-temporal graph scattering transform.
\newblock In {\em 9th International Conference on Learning Representations,
  {ICLR} 2021, Virtual Event, Austria, May 3-7, 2021}.

\bibitem[Perlmutter et~al., 2021]{perlmutter2021hybrid}
Perlmutter, M., He, J., Iwen, M., and Hirn, M. (2021).
\newblock A hybrid scattering transform for signals with isolated
  singularities.
\newblock In {\em 2021 55th Asilomar Conference on Signals, Systems, and
  Computers}, pages 1322--1329. IEEE.

\bibitem[Perlmutter et~al., 2023]{perlmutter2023understanding}
Perlmutter, M., Tong, A., Gao, F., Wolf, G., and Hirn, M. (2023).
\newblock Understanding graph neural networks with generalized geometric
  scattering transforms.
\newblock {\em SIAM Journal on Mathematics of Data Science (SIMODS), (to
  appear)}.

\bibitem[Ramp{\'a}{\v{s}}ek et~al., 2022]{rampavsek2022recipe}
Ramp{\'a}{\v{s}}ek, L., Galkin, M., Dwivedi, V.~P., Luu, A.~T., Wolf, G., and
  Beaini, D. (2022).
\newblock Recipe for a general, powerful, scalable graph transformer.
\newblock {\em Advances in Neural Information Processing Systems},
  35:14501--14515.

\bibitem[Richardson et~al., 2018]{richardson2018development}
Richardson, H., Lisandrelli, G., Riobueno-Naylor, A., and Saxe, R. (2018).
\newblock Development of the social brain from age three to twelve years.
\newblock {\em Nature communications}, 9(1):1027.

\bibitem[Rieck et~al., 2020]{rieck2020uncovering}
Rieck, B., Yates, T., Bock, C., Borgwardt, K., Wolf, G., Turk-Browne, N., and
  Krishnaswamy, S. (2020).
\newblock Uncovering the topology of time-varying fmri data using cubical
  persistence.
\newblock {\em Advances in neural information processing systems},
  33:6900--6912.

\bibitem[Shuman et~al., 2013]{shuman:emerging2013}
Shuman, D.~I., Narang, S.~K., Frossard, P., Ortega, A., and Vandergheynst, P.
  (2013).
\newblock The emerging field of signal processing on graphs: Extending
  high-dimensional data analysis to networks and other irregular domains.
\newblock {\em IEEE Signal Processing Magazine}, 30(3):83--98.

\bibitem[Tong et~al., 2022]{tong2022learnable}
Tong, A., Wenkel, F., Bhaskar, D., Macdonald, K., Grady, J., Perlmutter, M.,
  Krishnaswamy, S., and Wolf, G. (2022).
\newblock Learnable filters for geometric scattering modules.
\newblock {\em arXiv:2208.07458}.

\bibitem[Veli{\v{c}}kovi{\'c} et~al., 2018]{velivckovic2017graph}
Veli{\v{c}}kovi{\'c}, P., Cucurull, G., Casanova, A., Romero, A., Li{\`o}, P.,
  and Bengio, Y. (2018).
\newblock Graph attention networks.
\newblock In {\em International Conference on Learning Representations}.

\bibitem[Wenkel et~al., 2022]{wenkel2022overcoming}
Wenkel, F., Min, Y., Hirn, M., Perlmutter, M., and Wolf, G. (2022).
\newblock Overcoming oversmoothness in graph convolutional networks via hybrid
  scattering networks.
\newblock {\em arXiv:2201.08932}.

\bibitem[Wu et~al., 2020]{wu2020comprehensive}
Wu, Z., Pan, S., Chen, F., Long, G., Zhang, C., and Philip, S.~Y. (2020).
\newblock A comprehensive survey on graph neural networks.
\newblock {\em IEEE transactions on neural networks and learning systems},
  32(1):4--24.

\bibitem[Xu et~al., 2019]{xu2018how}
Xu, K., Hu, W., Leskovec, J., and Jegelka, S. (2019).
\newblock How powerful are graph neural networks?
\newblock In {\em International Conference on Learning Representations}.

\bibitem[Zhang and Chen, 2018]{zhang2018link}
Zhang, M. and Chen, Y. (2018).
\newblock Link prediction based on graph neural networks.
\newblock {\em Advances in neural information processing systems}, 31.

\bibitem[Zhu et~al., 2020]{zhu2020beyond}
Zhu, J., Yan, Y., Zhao, L., Heimann, M., Akoglu, L., and Koutra, D. (2020).
\newblock Beyond homophily in graph neural networks: Current limitations and
  effective designs.
\newblock {\em Advances in neural information processing systems},
  33:7793--7804.

\bibitem[Zou and Lerman, 2019a]{zou:graphScatGAN2019}
Zou, D. and Lerman, G. (2019a).
\newblock Encoding robust representation for graph generation.
\newblock In {\em International Joint Conference on Neural Networks}.

\bibitem[Zou and Lerman, 2019b]{zou:graphCNNScat2018}
Zou, D. and Lerman, G. (2019b).
\newblock Graph convolutional neural networks via scattering.
\newblock {\em Applied and Computational Harmonic Analysis},
  49(3)(3):1046--1074.

\end{thebibliography}
 \newpage

\clearpage

%%%%%%%%%%%%%%%%%%%%%%%%%%%%%%%%%%%%%%%%%%%%%%%%%%%%%%%%%%%%

\appendix
\onecolumn

\addcontentsline{toc}{section}{Appendix} % Add the appendix text to the document TOC
\part{Appendix} % Start the appendix part
\parttoc % Insert the appendix TOC

\section{Proof of Proposition \ref{prop: frame}}\label{prf: frame}

   The proof of Proposition \ref{prop: frame} follows by adapting the techniques used to prove Theorem 1 of \citet{tong2022learnable} and  Proposition 2.2
%\textcolor{red}{ADD NUMBER} 
of \citet{perlmutter2023understanding} to more general scales (for $\mathcal{W}^{(1)}$) and more general diffusion matrices (for $\mathcal{W}^{(2)}$). %Full details are provided in Section  \ref{prf: frame} of the appendix.

\begin{proof}
\newcommand{\bW}{W}
\newcommand{\bV}{V}
\newcommand{\bx}{\mathbf{x}}
We will first prove \eqref{eqn: W1 isometry}, which estabilished that $\mathcal{W}^{(1)}_{J}$ is an isometry. %Recall from \eqref{eqn: K self adjoint}, that $\mathbf{K}$ is self-adjoint on $\mathbf{L}^2_{\mathbf{w}}$ and 
Note that by \eqref{eqn: spectral calculus} and \eqref{eqn: K} we have 
\begin{align*}
    \Psi_{j}^{(1)} 
     =  q_j(K) 
     = \bW^{-1}\bV q_j(\Lambda)\bV^T\bW
\end{align*}
for all $0\leq j\leq J$ and also that 
\begin{align*}
    \Phi_J^{(1)} 
     =  q_{J+1}(K) = \bW^{-1}\bV q_{J+1}(\Lambda)\bV^T \bW.
\end{align*}
Additionally, we note that since $V$ is unitary, the definition of $\langle\cdot,\cdot\rangle_\mathbf{w}$ implies that we have
\begin{align*}
    \| \Psi_{j}^{(1)}\bx \|_{\mathbf{w}}^2 & = \langle \Psi_{j}^{(1)}\bx , \Psi_{j}^{(1)}\bx \rangle_\mathbf{w} 
     = \bx^T\bW^T\bV q_j(\Lambda)^2\bV^T\bW\bx,
\end{align*}
 and similarly $ \| \Phi_{j}^{(1)}\bx \|_{\mathbf{w}}^2  
     = \bx^T\bW^T\bV q_{J+1}(\Lambda)^2\bV^T\bW\bx$.
Therefore
\begin{align*}
    \|\mathcal{W}^{(1)}_{J}\mathbf{x}\|^2_{\mathbf{w}} &= \sum_{j=0}^J \| \Psi_{j}^{(1)}\bx \|_{\mathbf{w}}^2 + \| \Phi_J^{(1)}\bx \|_{\mathbf{w}}^2 \\
    & = \bx^T\bW^T\bV  \left[ \sum_{j=0}^{J+1}q_j(\Lambda)^2\right] \bV^T\bW\bx \\
    & = \bx^TW^T\bV Q_J(\Lambda)\bV^TW\bx \\
    & = \langle Q_J(\Lambda)\bV^T\bW\bx, \bV^T\bW\bx \rangle_2,
\end{align*}
where $Q_J(t) \coloneqq \sum_{j=0}^{J+1}q_j(t)^2$ (and $Q_J(\Lambda)$ is defined term by term along the diagonal according to \eqref{eqn: spectral calculus}).  Therefore the lower frame bound on $\mathcal{W}^{(1)}_{J}$ is given by
\begin{align*}
    c^{(1)}_J & \coloneqq \inf_{\mathbf{x}\neq0} \dfrac{\|\mathcal{W}^{(1)}_{J}\mathbf{x}\|^2_{\mathbf{w}}}{\|\bx\|^2_{\mathbf{w}}} \\
    & = \inf_{\mathbf{x}\neq0} \dfrac{\langle Q_J(\Lambda)\bV^T\bW\bx, \bV^T\bW\bx \rangle_2}{\|\bW\bx\|^2_{2}} \\
    & = \inf_{\mathbf{x}\neq0} \dfrac{\langle Q_J(\Lambda)\bV^T\bW\bx, \bV^T\bW\bx \rangle_2}{\|\bV^T\bW\bx\|^2_{2}} \label{b1}\tag{B.1}\\
    & = \inf_{\mathbf{y}\neq0} \dfrac{\langle Q_J(\Lambda) \mathbf{y}, \mathbf{y} \rangle}{\|\mathbf{y}\|^2_{2}} \label{b2}\tag{B.2}\\
    & = \min_{1\leq i \leq n} Q_J(\lambda_i) \label{b3}\tag{B.3}.\\
\end{align*}
\ref{b1} follows because  $\bV$ is unitary. \ref{b2} follows because $\bW$ is invertible and \ref{b3} follows because $Q_J(\Lambda)$ is diagonal with nonzero entries $Q_J(\lambda_i).$ We can similarly calculate the upper frame bound to be
\begin{align*}
    C^{(1)}_J & \coloneqq \sup_{\mathbf{x}\neq0} \dfrac{\|\mathcal{W}^{(1)}_{J}\mathbf{x}\|^2_{\mathbf{w}}}{\|\bx\|^2_{\mathbf{w}}} = \max_{1\leq i \leq n} Q_J(\lambda_i). 
\end{align*}
Using a telescoping sum, we see that
\begin{align}
    Q_J(t) & = \sum_{j=0}^{J+1}q_j(t)^2=
\sum_{j=0}^{J+1}p_j(t) 
    = t^{s_{J+1}}+\sum_{j=0}^{J}(t^{s_j}-t^{s_{j+1}})= t^{s_{J+1}} - (t^{s_{J+1}}-1) = 1 \label{eqn: telescope}
\end{align}
uniformly on $0\leq t\leq 1$. Thus, $c^{(1)}_J=C^{(1)}_J=1$ which concludes the proof of \eqref{eqn: W1 isometry}.

Turning our attention to \eqref{eqn: W2frame}, we may use the same logic as before to see that the lower and upper frame bounds for $\mathcal{W}^{(2)}_{J}$ are given by:
\begin{align*}
    c^{(2)}_J & \coloneqq \inf_{\mathbf{x}\neq0} \dfrac{\|\mathcal{W}^{(2)}_{J}\mathbf{x}\|^2_{\mathbf{w}}}{\|\bx\|^2_{\mathbf{w}}}
     = \min_{1\leq i \leq n} P_J(\lambda_i),
    \quad C^{(2)}_J  \coloneqq \sup_{\mathbf{x}\neq0} \dfrac{\|\mathcal{W}^{(2)}_{J}\mathbf{x}\|^2_{\mathbf{w}}}{\|\bx\|^2_{\mathbf{w}}}  = \max_{1\leq i \leq n} P_J(\lambda_i),
\end{align*}
where $P_J(t) \coloneqq \sum_{j=0}^{J+1}p_j(t)^2$. To determine the upper frame bound, it suffices to note that
\begin{align*}
    \max_{1\leq i \leq n} P_J(\lambda_i) & \leq \sup_{t\in [0,1]}\sum_{j=0}^{J+1}p_j(t)^2 \leq \sup_{t\in [0,1]} \left( \sum_{j=0}^{J+1}p_j(t)\right)^2 = 1
\end{align*}
where the second inequality comes from the positivity of each $p_j(t)$ and the final equality comes from the same reasoning as in \eqref{eqn: telescope}.
For the lower bound, we see that 
\begin{align*}
    \min_{1\leq i \leq n} P_J(\lambda_i) \geq \inf_{t\in [0,1]}\sum_{j=0}^{J+1}p_j(t)^2 \geq \inf_{t\in [0,1]}p_0(t)^2+p_{J+1}(t)^2=\inf_{t\in [0,1]}(1-t)^2+t^{2s_{J+1}}.
\end{align*}
The final quantity is a positive constant depending only on $s_{J+1}$. Thus, this completes the proof.

\end{proof}

\begin{remark}
In our experiments, we use dyadic scales, ($s_0=0, s_1=1, s_j=2^{j-1}, j\geq 2$). In this case,  Proposition 1 of \cite{Chew2022measure} implies that the lower frame bound $c$ for the $\mathcal{W}^{(2)}_J$ wavelets may be chosen to be a universal constant.
\end{remark}

\section{Details and proof for Theorem \ref{thm: not injective new}}\label{sec: proof of noninjectivity}

In this section, we provide full details on Theorem \ref{thm: not injective new} as well as some discussion. 

\subsection{Background - Wavelet Phase Retrieval}

The original scattering transform \citep{mallat:scattering2012} was introduced as a theoretical model for understanding the success of convolutional neural networks, defining scattering coefficients via an alternating sequence of wavelet convolutions and pointwise absolute values (moduluses):
$$
S_J[j_1,\ldots,j_m]f=\Phi_JM\Psi_{j_m}\ldots M\Psi_{j_1}f,\quad\text{for } f\in\mathbf{L}^2(\mathbb{R}^n).
$$
A natural question is to what extent do these coefficients determine a signal $f$? If two signals, $f_1$ and $f_2$ have the same coefficents, does this imply $f_1$ and $f_2$ coincide?

To answer this question, \citet{mallat:waveletPhaseRetrieval2015}, studied the descriptive power of the wavelet-modulus, $M\Psi_j$ which is the key building block of the scattering transform. Since $\Psi_j$ is linear, it is immediate that $M\Psi_jf_1=M\Psi_jf_2$ whenever $f_1=\pm f_2$ (or more generally when $f_1=e^{i\theta}f_2$ in the case the functions are complex-valued). The question then becomes whether this is the only setting in which $M\Psi_jf_1=M\Psi_jf_2$, i.e., are there any non-trivial ambiguities in the wavelet modulus. Questions such as this, whether or not a function can be determined (up to a global sign) by magnitude-only measurements, are known as the phase retrieval problems \citep{bandeira2014saving} and arise in wide variety of scientific domains including optics 
\citep{antonello2015modal}, astronomy \cite{fienup1987phase},
x-ray crystallography %\citep{buhler2005symmetric,liu2012phase}
\citep{liu2012phase}, and speech-signal processing \citep{balan2006signal}.

The primary results  of \citet{mallat:waveletPhaseRetrieval2015} are (i) if the $\Psi_j$ are chosen to be Cauchy wavelets, then the wavelet modulus \emph{is} injective (up to the equivalence relation $f(x)\sim e^{i\theta}f(x)$) and therefore invertible. (ii) There is no uniform modulus of continuity on the inverse map, i.e., there are signals $f_1, f_2$ which are far apart (in the quotient metric induced by the relevant equivalence relation) such that $f_1$ and $f_2$ have nearly identical wavelet modulus.

Theorem \ref{thm: not injective} stated below, which is a more detailed version of Theorem \ref{thm: not injective new} from the main body, is meant to address the analogous question in the graph setting. Is the wavelet modulus invertible (up to a global sign change)? We show that under certain circumstances the answer to this question is no. There are signals $\mathbf{x_1}\neq\pm\mathbf{x_2}$ with identical wavelet moduluses therefore indentical scattering coefficients. 

Additionally, we also note that Theorem \ref{prop: bi-Lipschitz} is also partially motivated by the second result of \citet{mallat:waveletPhaseRetrieval2015}. It shows that unlike the (Euclidean) scattering transform, the map which recovers a signal $\mathbf{x}$ from its BLIS coefficients is Lipschitz continuous. Therefore, the BLIS module can be stably inverted. 

\subsection{Statement and Proof of Theorem \ref{thm: not injective}}

We first introduce some notation. For $v_i,v_j\in V$, we let $d(v_i,v_j)$ denote the unweighted path distance between $v_i$ and $v_j$. That is,  $d(v_i,v_j)$ is the smallest $k$ such that $A^k(v_i,v_j)\neq 0$, when $v_i\neq v_j$, and $d(v_i,v_i)=0$. We then define the diameter of $G$
 by $$\text{diam}(G)=\max_{v_i,v_j\in V}d(v_i,v_j).$$ With this notation, we may now state our theorem in detail and provide a proof. 
\begin{theorem}\label{thm: not injective} Let $\mathcal{W}_J=\{\Psi_j\}_{j=0}^J\cup\{\Phi_J\}$  be the wavelets $\mathcal{W}^{(2)}_J$ constructed in Section \ref{sec: wavelets}.
Suppose at least one of the following two conditions hold.
\begin{enumerate}
\item $G$ is a bipartite graph.
\item $g(t)$ is as in \eqref{eqn: g canonical} and $\text{diam}(G)> 2s_{J+1}$.
\end{enumerate}
Then there exist signals $\mathbf{x_1},\mathbf{x_2}$ such that $\mathbf{x_1}\neq \pm \mathbf{x_2}$, but 
$$
M\Psi_j\mathbf{x_1}=M\Psi_j\mathbf{x_2}\quad \text{for all } 0\leq j \leq J,
$$
and therefore, $\mathbf{x_1}$ and $\mathbf{x_2}$ have identical $m$-th order scattering coefficients for all $m\geq 1$.
\end{theorem}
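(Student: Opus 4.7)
I would construct $\mathbf{x_1}=\mathbf{a}+\mathbf{b}$ and $\mathbf{x_2}=\mathbf{a}-\mathbf{b}$ for nonzero $\mathbf{a},\mathbf{b}$ chosen so that the supports of $\Psi_j\mathbf{a}$ and $\Psi_j\mathbf{b}$ in $V$ are disjoint for every $0\leq j\leq J$. Granting this, at any vertex $v$ at most one of $(\Psi_j\mathbf{a})(v),(\Psi_j\mathbf{b})(v)$ is nonzero, so
\[ \bigl|(\Psi_j\mathbf{x_1})(v)\bigr|=\bigl|(\Psi_j\mathbf{a})(v)\bigr|+\bigl|(\Psi_j\mathbf{b})(v)\bigr|=\bigl|(\Psi_j\mathbf{x_2})(v)\bigr|, \]
which yields $M\Psi_j\mathbf{x_1}=M\Psi_j\mathbf{x_2}$ for every $j$. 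A one-line induction on depth $m$ then propagates this identity through any composition $M\Psi_{j_m}\cdots M\Psi_{j_1}$, and applying $\Phi_J$ at the end gives $S_J[j_1,\ldots,j_m]\mathbf{x_1}=S_J[j_1,\ldots,j_m]\mathbf{x_2}$ for all $m\geq 1$. Nontriviality $\mathbf{x_1}\neq\pm\mathbf{x_2}$ is immediate from $\mathbf{a},\mathbf{b}\neq\mathbf{0}$: $\mathbf{x_1}=\mathbf{x_2}\iff \mathbf{b}=\mathbf{0}$ and $\mathbf{x_1}=-\mathbf{x_2}\iff \mathbf{a}=\mathbf{0}$.

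\textbf{Diameter case.} Under condition~2 I would pick vertices $u_1,u_2$ with $d(u_1,u_2)>2s_{J+1}$ (which exists by the diameter hypothesis) and set $\mathbf{a}=\delta_{u_1}$, $\mathbf{b}=\delta_{u_2}$. With $g(t)=1-t/2$ the matrix $K=W^{-1}TW$ is a first-degree polynomial in $D^{-1/2}AD^{-1/2}$ conjugated by the diagonal $W$, so each $K^k$ has the same nonzero pattern as $T^k$, and $(K^k)_{u,v}\neq 0$ forces $d(u,v)\leq k$. Since $\deg p_j\leq s_{J+1}$ for every $0\leq j\leq J+1$, $\mathrm{supp}(\Psi_j\mathbf{a})$ sits inside the radius-$s_{J+1}$ ball around $u_1$ and $\mathrm{supp}(\Psi_j\mathbf{b})$ inside the one around $u_2$; these balls are disjoint by construction, which is exactly the hypothesis of the general argument.

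\textbf{Bipartite case.} If $G$ is bipartite then $L_N$ has $\omega_n=2$ as an eigenvalue, realized by the explicit vector $\mathbf{v_n}(v)=\pm\sqrt{d(v)}$ depending on the partition. With $g(t)=1-t/2$, $T\mathbf{v_n}=0$, so $\tilde{\mathbf{v}}:=W^{-1}\mathbf{v_n}\in\ker K$. Consequently $p_j(K)\tilde{\mathbf{v}}=p_j(0)\tilde{\mathbf{v}}$ vanishes for every $j\geq 1$ and for $\Phi_J$, and equals $\tilde{\mathbf{v}}$ only when $j=0$. I would combine this spectral degeneracy with the bipartite sign-flip identity $T(\sigma\cdot\mathbf{x})=\sigma\cdot\tilde T\mathbf{x}$, where $\sigma=\pm1$ according to partition and $\tilde T=(I-D^{-1/2}AD^{-1/2})/2$, to engineer a splitting in which $\Psi_j\mathbf{x_1}$ and $\Psi_j\mathbf{x_2}$ coincide up to a vertexwise sign at every scale, so that the moduli still match and the induction of the first paragraph closes the proof.

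\textbf{Main obstacle.} The diameter case is essentially polynomial-support bookkeeping and should go through cleanly. The bipartite case is the real work: spatial separation is unavailable, so one must leverage the degeneracy of $T$ at eigenvalue $2$, and arranging the resulting splitting to simultaneously (i) avoid the trivial equivalence $\mathbf{x_1}=\pm\mathbf{x_2}$ and (ii) produce pointwise sign-alignment of $\Psi_j\mathbf{x_1}$ and $\Psi_j\mathbf{x_2}$ for every $j$ (rather than just for $j\geq 1$, which is free) is the delicate step.
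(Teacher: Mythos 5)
Your general reduction (disjoint supports of $\Psi_j\mathbf{a}$ and $\Psi_j\mathbf{b}$ imply equal moduli of $\mathbf{a}\pm\mathbf{b}$, then induct on depth) is sound, and your diameter case is essentially the paper's argument: the paper takes indicator vectors of two sets $S_1,S_2$ at distance at least $2s_{J+1}+1$ and proves the same support-propagation fact (that $K^t\mathbf{x}$ is supported within distance $t$ of $\mathrm{supp}(\mathbf{x})$, because $K=W^{-1}g(L_N)W$ with $g$ affine has the adjacency pattern of $I+A$) as a lemma. Your singleton version is fine.

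The bipartite case, however, has a genuine gap, and you have flagged it yourself as the ``delicate step'' without closing it. The fix is much simpler than the sign-flip machinery you sketch, and it does not use spatial separation at all: you already observed that $\omega_n=2$ gives a vector $\mathbf{u_2}\neq 0$ with $K\mathbf{u_2}=0$, hence $\Psi_j\mathbf{u_2}=p_j(0)\mathbf{u_2}$, which is $\mathbf{u_2}$ for $j=0$ and $\mathbf{0}$ for $1\leq j\leq J$. The missing ingredient is the companion vector at the \emph{other} end of the spectrum: since $\omega_1=0$ and $g(0)=1$, there is $\mathbf{u_1}\neq0$ with $K\mathbf{u_1}=\mathbf{u_1}$, so $\Psi_j\mathbf{u_1}=p_j(1)\mathbf{u_1}=(1-1)\mathbf{u_1}=\mathbf{0}$ for every $0\leq j\leq J$. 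Taking $\mathbf{x_1}=\mathbf{u_1}+\mathbf{u_2}$ and $\mathbf{x_2}=\mathbf{u_1}-\mathbf{u_2}$ then gives $\Psi_j\mathbf{x_1}=\Psi_j\mathbf{x_2}=\mathbf{0}$ for $j\geq1$ and $\Psi_0\mathbf{x_1}=-\Psi_0\mathbf{x_2}=\mathbf{u_2}$, so all wavelet moduli coincide outright; no pointwise sign-alignment argument is needed. Two further problems with your sketched route: (i) the identity $T(\sigma\cdot\mathbf{x})=\sigma\cdot(I-T)\mathbf{x}$ relates $\Psi_j$ applied to $\sigma\cdot\mathbf{x}$ to a \emph{different} filter $(I-T)^{s_j}-(I-T)^{s_{j+1}}$ applied to $\mathbf{x}$, so it does not by itself produce matching moduli for the same $\Psi_j$; and (ii) your bipartite argument assumes $g(t)=1-t/2$, whereas condition~1 of the theorem places no such restriction on $g$ — only $g(0)=1$ and $g(2)=0$ are available, which is exactly what the eigenvalue argument above uses.
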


\begin{proof}

Let us first consider the case where $G$ is bipartite. As in Section \ref{sec: notation}, let $\mathbf{v_1},\ldots,\mathbf{v_n}$ denote the eigenvalues of $L_N$ with $L_N\mathbf{v_i}=\omega_i\mathbf{v_i}$, $0=\omega_1<\omega_2\leq \ldots\leq\omega_n\leq2$. It is known (see, e.g., Lemma 1.7 of \citet{chung_1997}) that since $G$ is bipartite, we have $\omega_n=2$.

Since the function $g(t)$
 defined in Section \ref{sec: diffusion matrices} satisfies $g(0)=1$, $g(2)=0$, this implies that $T=g(L_N)$ has eigenvalues of $1$ and $0$. Moreover, since $K$ is similar to $T$, this implies that $K$  also has eigenvalues $0$ and $1$. That is, there exist vectors $\mathbf{u_1},\mathbf{u_2}\neq 0$ such that $K\mathbf{u_1}=\mathbf{u_1},$ $K\mathbf{u_2}=0$.

Let $\mathbf{x_1}=\mathbf{u_1}+\mathbf{u_2}$ and $\mathbf{x_2}=\mathbf{u_1}-\mathbf{u_2}$. By definition, neither $\mathbf{u_1}$ or $\mathbf{u_2}$ are the zero vector and therefore it is clear that $\mathbf{x_1}\neq \pm\mathbf{x_2}$. Thus, the proof will be complete once we show that $M\Psi_{j}\mathbf{x}_1=M\Psi_{j}\mathbf{x}_2$ for all $0\leq j\leq J$.

We first note that for $i=1,2$ and $1\leq j\leq J,$ we have $s_i,s_{i+1}>0$ and thus we have 
\begin{align*}\Psi_j\mathbf{x_i}&=K^{s_j}\mathbf{x_i}-K^{s_{j+1}}\mathbf{x_i}\\
&=K^{s_j}(\mathbf{u_1}\pm\mathbf{u_2})-K^{s_{j+1}}(\mathbf{u_1}\pm\mathbf{u_2})\\
&=(K^{s_j}\mathbf{u_1}-K^{s_{j+1}}\mathbf{u_1})\pm(K^{s_{j+1}}\mathbf{u_2}-K^{s_{j}}\mathbf{u_2})\\
&=\mathbf{u_1}-\mathbf{u_1}\pm(0-0)\\
&=0,
\end{align*}
which implies that $M\Psi_j\mathbf{x_1}=M\Psi_j\mathbf{x_2}=0$.

In the case where $j=0$, we have $\Psi_0=I-K$. Therefore,
\begin{align*}\Psi_0\mathbf{x_i}&=\mathbf{x_i}-K\mathbf{x_i}\\
&=\mathbf{u_1}\pm\mathbf{u_2} - K(\mathbf{u_1}\pm \mathbf{u_2})\\
&=
\mathbf{u_1}\pm\mathbf{u_2}-(\mathbf{u_1}\pm0)\\
&=\pm\mathbf{u_2}.
\end{align*}
Therefore, we also have $M\Psi_0\mathbf{x_1}=M\Psi_0\mathbf{x_2}$, which completes the proof under the assumption the graph is bipartite.

In the case where $\text{diam}(G) >  2s_{J+1}$ and $g(t)$ is as in \eqref{eqn: g canonical}, the proof is based on adapting the  techniques from \citet{iwen2019lower} which analyzed the instability of phase retrieval from locally supported measurements on $\mathbb{C}^n$ to the irregular geometry of a graph. (See also \citet{cahill2016phase} and \citet{cheng2021stable}.) The assumption that $\text{diam}(G) >  2s_{J+1}$ implies that there exist disjoint, non-empty subsets $S_1,S_2\subseteq V$ such that 

\begin{equation}\label{eqn: D}
\min_{v_1\in S_2, \, v_2\in S_2}  d(v_1,v_2) \geq 2s_{J+1}+1.
\end{equation}

We now define $\mathbf{x_1}$ and $\mathbf{x_2}$ by \begin{equation}\label{eqn: x expansion}
\mathbf{x_1}\coloneqq\boldsymbol{\delta}_{S_1}\pm\boldsymbol{\delta}_{S_2}\quad\text{and}\quad \mathbf{x_2}\coloneqq\boldsymbol{\delta}_{S_1}\pm\boldsymbol{\delta}_{S_2},
\end{equation}
where $\boldsymbol{\delta}_{S_1}$ is the indicator signal defined by $\boldsymbol{\delta}_{S_1}(v)=1$ if $v\in S_1$, $\boldsymbol{\delta}_{S_1}(v)=0$ otherwise, and $\boldsymbol{\delta}_{S_2}$ is defined similarly. 

The following lemma will imply that there is no overlap in the support of $\Psi_j\boldsymbol{\delta}_{S_1}$ with $\Psi_j\boldsymbol{\delta}_{S_2}$.
\begin{lemma}\label{lem: supports} 
Let $\mathbf{x}$ be a graph signal whose non-zero entries are contained a set $S\subseteq V$. Then for all integers $t \geq 0$, the support of $K^t\mathbf{x}$ is contained in the set $S_t=\{v\in V: \exists u\in S, d(u,v)\leq t\}.$
\end{lemma}
\begin{proof}
We first show that all  $K_{i,j}$ are zero except for  when either $i=j$ or $\{i,j\}\in E$. Indeed, this property is clearly satisfied by the unnormalized Laplacian $L_U=D-A$. Morever, the non-zero entries of a matrix are unchanged by multiplication (either on the left of the right) by a diagonal matrix. Therefore, this property is also satisfied by $L_N=D^{-1/2}L_UD^{-1/2}$. Additionally, this property is clearly satisfied by the identity matrix and also preserved under linear combinations. Therefore, it is also satisfied by $T=g(L_N)$. Lastly, we note that is also satisfied by $K$ since $K=W^{-1}TW$ and $W$
 is diagonal. 

 We now prove the lemma. The case where $t=0$ is trivial. 
 For $t=1$, we note that $\{K\mathbf{x}_i\}\neq 0$ implies that there exist $j$ such that $K_{i,j}\neq 0$ and $x_j\neq 0$. In light of the preceeding paragraph, this implies that $v_i$ is within distance one of some point in $S$ which proves the $t=1$ case. The result now follows inductively noting that $K^{t+1}\mathbf{x}=K(K^t\mathbf{x})$
\end{proof}
In light of Lemma \ref{lem: supports}, we see that for all $0\leq j\leq J$, the support of $\boldsymbol{\delta}_{S_1}$ is contained in $S_{1,s_{J+1}}\coloneqq\{v\in V: \exists u\in S_1, d(u,v)\leq s_{J+1}\}$ and the 
support of $\boldsymbol{\delta}_{S_2}$ is contained in $S_{2,s_{J+1}}\coloneqq\{v\in V: \exists u\in S_2, d(u,v)\leq s_{J+1}\}$.
Therefore, since $$\Psi_j(\mathbf{x_i})=\Psi_j\boldsymbol{\delta}_{S_1}\pm \Psi_j\boldsymbol{\delta}_{S_2},$$
\eqref{eqn: D} implies that 
$$\Psi_j(\mathbf{x_i})(v)=\begin{cases}
\Psi_j(\boldsymbol{\delta}_{S_1})(v) &\text{ if } v\in S_1\\
\pm \Psi_j(\boldsymbol{\delta}_{S_2})(v) &\text{ if } v\in S_2\\
0&\text{ otherwise}
\end{cases}.
$$
This implies that $M\Psi_j\mathbf{x_1}(v)=M\Psi_j\mathbf{x_2}(v)$ for all $j$ and all $v$ and therfore completes the proof.

\end{proof}

\begin{remark}
   In addition to $M\Psi_j\mathbf{x_1}=M\Psi_j\mathbf{x_2}$, we aslo have $M\Phi_J\mathbf{x_1}=M\Phi_J\mathbf{x_2}$. In the bipartite case, we have $\Phi_{J}\mathbf{x_1} =\Phi_{J}\mathbf{x_2}= \mathbf{u_1}$. Moreover, in the case where the graph has a large diameter, the fact that $M\Phi_J\mathbf{x_1}=M\Phi_J\mathbf{x_2}$ follows directly from Lemma \ref{lem: supports}.
\end{remark}

\begin{remark} In the large diameter case with $K=P$,
one may also modify the above construction  to ensure that the zeroth-order coefficients $\Phi_J\mathbf{x_i}$ are identical for $\mathbf{x_1}$ and $\mathbf{x_2}$, after they are fed into the aggregation module. To do this, we one chooses three sets $S_1,S_2,S_3$, which are all sufficiently far apart and satisfy $|S_1|=|S_2|=|S_3|$ and  similar to the proof above  set $\mathbf{x_1}=\boldsymbol{\delta}_{S_1}+\boldsymbol{\delta}_{S_2}-\boldsymbol{\delta}_{S_3}$, $\mathbf{x_2}=\boldsymbol{\delta}_{S_1}-\boldsymbol{\delta}_{S_2}+\boldsymbol{\delta}_{S_3}$. The fact that $P$ is a Markov matrix implies that if preserves the $\ell^1$ norm of each of the Dirac $\boldsymbol{\delta}$
 functions. Therefore,  one may verify that the aggregated zero-th order coefficients will be equal to $|S_1|$ for both signals. Additionally, we note that several papers on the graph scattering transform including \citet{gama:diffScatGraphs2018} and \citet{gao:graphScat2018} use global summation rather than low-pass filtering in the definition of the scattering coefficients, i.e., $\overline{S}_{J}[j_1,\ldots,j_m]\mathbf{x}=\|U[j_1,\ldots,j_m]\mathbf{x}\|_1$. This case, the zero's order coefficients of these two signals will also coincide, regardless of the choice of diffusion matrix. \end{remark}

\begin{remark}
Theorem \ref{thm: not injective} provides two examples of settings where the wavelet modulus fails to be injective. We note that the first assumption, that the graph is bipartite, is satisfied by graphs used in recommender systems where there are links between users and products. The latter assumption, that the graph has a large diameter will typically be satisfied by graphs constructed from geo-spatial data which do not experience the small world phenomenon. Our analysis shows that if graphs with large diameter, one needs to choose a large value of $J$ therefore increasing the computational cost of the wavelet transform. By contrast  Theorem \ref{prop: bi-Lipschitz} shows that in the BLIS module, $J$ may be chosen independent of the graph allowing for efficient implementation on graphs with large diameters.
\end{remark}

\section{Proof of Theorem \ref{prop: bi-Lipschitz}}\label{prf: bi-Lipschitz}

 Theorem \ref{prop: bi-Lipschitz} is proved by iteratively applying the frame bounds \eqref{eqn: Frame condition} as well as the following lemma which shows that the map $\sigma: \mathbb{R}^n\rightarrow\mathbb{R}^{2n}$ defined by $$\sigma(\mathbf{x})=(\sigma_1(\mathbf{x})^T,\sigma_2(\mathbf{x})^T)^T$$
is bi-Lipschitz.

\begin{lemma}\label{lem: sigmas bi-Lipschitz} 
For all $\mathbf{x}$ and $\mathbf{y}$ in $\mathbb{R}^n$, we have 
\begin{align*}
    \frac{1}{2} \norm{\mathbf{x}-\mathbf{y}}_{\mathbf{w}}^2 &\leq \norm{\sigma_{1}(\mathbf{x})-\sigma_{1}(\mathbf{y})}_{\mathbf{w}}^2+\norm{\sigma_{2}(\mathbf{x})-\sigma_{2}(\mathbf{y})}_{\mathbf{w}}^2\leq \norm{\mathbf{x}-\mathbf{y}}_{\mathbf{w}}^2.
\end{align*}
\end{lemma}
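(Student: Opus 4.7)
The plan is to reduce the lemma to a scalar, unweighted inequality and then to dispatch it by a simple case analysis on the signs of the scalar inputs. Since both norms in the statement are weighted $\ell^2$ norms and both activations $\sigma_1,\sigma_2$ act componentwise, I would begin by writing
\begin{equation*}
\norm{\sigma_k(\mathbf{x})-\sigma_k(\mathbf{y})}_{\mathbf{w}}^2 \;=\; \sum_{i=1}^n w_i\bigl(\sigma_k(x_i)-\sigma_k(y_i)\bigr)^2, \qquad k=1,2,
\end{equation*}
and similarly for $\norm{\mathbf{x}-\mathbf{y}}_{\mathbf{w}}^2$. Since the weights $w_i$ are positive and the coordinate pairs $(x_i,y_i)$ are independent, it suffices to show the scalar bound
\begin{equation*}
\tfrac12(x-y)^2 \;\leq\; (\sigma_1(x)-\sigma_1(y))^2 + (\sigma_2(x)-\sigma_2(y))^2 \;\leq\; (x-y)^2
\end{equation*}
for every $x,y\in\mathbb{R}$; the lemma then follows by multiplying by $w_i$ and summing.

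Next I would prove the scalar inequality by the three sign cases. When $x,y\geq 0$ we have $\sigma_1(x)-\sigma_1(y)=x-y$ and $\sigma_2(x)-\sigma_2(y)=0$, so the middle expression equals $(x-y)^2$ exactly; the symmetric conclusion holds when $x,y\leq 0$. The only substantive case is the mixed-sign case, which by symmetry I may take to be $x\geq 0\geq y$. Then $\sigma_1(x)-\sigma_1(y)=x$ and $\sigma_2(x)-\sigma_2(y)=-(-y)=y$ (up to signs that get squared away), so the middle expression equals $x^2+y^2$, while $(x-y)^2 = x^2+y^2+2|xy|$ because $xy\leq 0$.

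The upper bound $x^2+y^2\leq x^2+y^2+2|xy|$ is immediate. The lower bound $\tfrac12(x^2+y^2+2|xy|)\leq x^2+y^2$ is equivalent to $|xy|\leq \tfrac12(x^2+y^2)$, which is just AM–GM. This disposes of the mixed case, and combined with the same-sign cases (where the lower bound is trivial since the middle equals $(x-y)^2$) establishes the scalar inequality, hence the lemma.

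There is no real obstacle here; the only subtle point is realizing that the two ReLU branches together act as an isometry on each orthant and only lose information across the sign boundary, so the mixed-sign case is the one that forces the factor $\tfrac12$ in the lower bound (and it is sharp, attained for instance by $x=-y\neq 0$, where the middle equals $2x^2$ and $(x-y)^2=4x^2$).
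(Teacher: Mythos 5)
Your proof is correct and follows essentially the same route as the paper's: reduce to the scalar inequality coordinatewise, observe equality in the same-sign cases, and handle the mixed-sign case via $x^2+y^2$ versus $(|x|+|y|)^2$, with the lower bound amounting to AM--GM. The sharpness remark at $x=-y$ is a nice addition but the argument itself matches the paper's.
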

For a proof of Lemma \ref{lem: sigmas bi-Lipschitz}, please see Appendix \ref{prf: sigmas bi-Lipschitz}.

\begin{proof}[Proof of Theorem \ref{prop: bi-Lipschitz}] We argue by induction on $m$. To establish the base case $m=1$, we note that 
\begin{align*}
    & \norm{\mathbf{B}_{1}(\mathbf{x})-\mathbf{B}_{1}(\mathbf{y})}_{\mathbf{w},2}^2 
 = \sum_{k=1}^{2} \sum_{j=0}^{J+1} \norm{B[j,k](\mathbf{x})-B[j,k](\mathbf{y})}_{\mathbf{w}}^2  = \sum_{k=1}^{2}\sum_{j=0}^{J+1}  \norm{\sigma_{k} (F_{j}\mathbf{x})-\sigma_{k}(F_{j}\mathbf{y})}_{\mathbf{w}}^2.
\end{align*}
Lemma \ref{lem: sigmas bi-Lipschitz} and \eqref{eqn: Frame condition} imply
\begin{align*}
    \frac{c}{2}\|\mathbf{x}-\mathbf{y}\|_\mathbf{w} \leq 
    &  \frac{1}{2} \sum_{j=0}^{J+1} \norm{F_{j}\mathbf{x}-F_{j}\mathbf{y}}_{\mathbf{w}}^2 \\
& \leq \sum_{j=0}^{J+1} \sum_{k=1}^{2} \norm{\sigma_{k}(F_{j}\mathbf{x})-\sigma_{k}(F_{j}\mathbf{y})}_{\mathbf{w}}^2 \\
& \leq \sum_{j=0}^{J+1}\norm{F_{j}\mathbf{x}-F_{j}\mathbf{y}}_{\mathbf{w}}^2\\
&\leq    C \|\mathbf{x}-\mathbf{y}\|_\mathbf{w}, \end{align*}
which establishes the base case.

Now, assume the result for $m$, i.e.,
\begin{align*}
     \left( \frac{c}{2}\right) ^{m}\norm{\mathbf{x}-\mathbf{y}}_{\mathbf{w}}^2 & \leq \norm{\mathbf{B}_{m}(\mathbf{x})-\mathbf{B}_{m}(\mathbf{y})}_{\mathbf{w},2}^2\leq C^{m}\norm{\mathbf{x}-\mathbf{y}}_{\mathbf{w}}^2,
\end{align*}
and consider  $\norm{\mathbf{B}_{m+1}(\mathbf{x})-\mathbf{B}_{m+1}(\mathbf{y})}_{\mathbf{w},2}^2$. We note that by construction we have 
\begin{equation*}
B[j_1,k_1,\ldots,j_{m},k_m,j_{m+1},k_{m+1}](\mathbf{x})=B[j_{m+1},k_{m+1}]B[j_1,k_1,\ldots,j_{m},k_m](\mathbf{x}).
\end{equation*}

Therefore, for any fixed $j_1,k_1,\ldots,j_m,k_m$, we have 
\begin{align}
&\sum_{k_{m+1}=1}^2\sum_{j_{m+1}=0}^J\|B[j_1,k_1,\ldots,j_{m},k_m,j_{m+1},k_{m+1}](\mathbf{x})-B[j_1,k_1,\ldots,j_{m},k_m,j_{m+1},k_{m+1}](\mathbf{y})\|_\mathbf{w}^2\nonumber\\
=&\sum_{k_{m+1}=1}^2\sum_{j_{m+1}=0}^J\|B[j_{m+1},k_{m+1}]B[j_1,k_1,\ldots,j_{m},k_m](\mathbf{x})-B[j_{m+1},k_{m+1}]B[j_1,k_1,\ldots,j_{m},k_m](\mathbf{y})\|_\mathbf{w}^2\nonumber\\
\leq&C\|B[j_1,k_1,\ldots,j_{m},k_m](\mathbf{x})-B[j_1,k_1,\ldots,j_{m},k_m](\mathbf{y})\|_\mathbf{w}^2\label{eqn: UB induction step},
\end{align}  
where the final inequality follows by applying the result with $m=1$.
Similarly we have 
\begin{align}
&\sum_{k_{m+1}=1}^2\sum_{j_{m+1}=0}^J\|B[j_1,k_1,\ldots,j_{m},k_m,j_{m+1},k_{m+1}](\mathbf{x})-B[j_1,k_1,\ldots,j_{m},k_m,j_{m+1},k_{m+1}](\mathbf{y})\|_\mathbf{w}^2\nonumber\\
&\geq\frac{c}{2}\|B[j_1,k_1,\ldots,j_{m},k_m](\mathbf{x})-B[j_1,k_1,\ldots,j_{m},k_m](\mathbf{y})\|_\mathbf{w}^2.\label{eqn: LB induction step}
\end{align}  
Therefore, using the inductive hypothesis and \eqref{eqn: UB induction step}, we have 
\begin{align*}
    &\norm{\mathbf{B}_{m+1}(\mathbf{x})-\mathbf{B}_{m+1}(\mathbf{y})}_{\mathbf{w},2}^2 \\
    =& \sum_{k_{m+1}=1}^{2} \sum_{j_{m+1}=0}^{J+1}\sum_{k_{m}=1}^{2} \sum_{j_{m}=0}^{J+1} \cdots \sum_{k_{1}=1}^{2} \sum_{j_{1}=0}^{J+1}
     \norm{B[j_1,k_1,\ldots,j_{m},k_m,j_{m+1},k_{m+1}](\mathbf{x})-B[j_1,k_1,\ldots,j_{m},k_m,j_{m+1},k_{m+1}](\mathbf{y})}_{\mathbf{w}}^2\\
     \leq&C\sum_{k_{m}=1}^{2} \sum_{j_{m}=0}^{J+1} \cdots \sum_{k_{1}=1}^{2} \sum_{j_{1}=0}^{J+1}
     \norm{B[j_1,k_1,\ldots,j_{m},k_m,j_{m+1},k_{m+1}](\mathbf{x})-B[j_1,k_1,\ldots,j_{m},k_m,j_{m+1},k_{m+1}](\mathbf{y})}_{\mathbf{w}}^2\\
     \leq&C^{m+1}\|\mathbf{x}-\mathbf{y}\|_\mathbf{w}^2.
\end{align*}
which completes the proof for the upper bound. The lower bound follows by the same reasoning, but with \eqref{eqn: LB induction step} in place of \eqref{eqn: UB induction step}.

\end{proof}

\subsection{Inverting the BLIS Module}\label{app: inverse}

The lower bound in Theorem \ref{prop: bi-Lipschitz} implies the existence of a Lipschitz continuous inverse map (defined on the range of $\mathbf{B}_m$) which recovers $\mathbf{x}$ from $\mathbf{B}_m(\mathbf{x})$. This is noteworthy in part because because numerous works such as \citet{zou:graphScatGAN2019, BhaskarGCPK22,perlmutter2021hybrid, bruna2019multiscale} have attempted to invert variations of the scattering transform for the purposes of data synthesis (with varying degrees of theoretical justification).
We also note that in the the case where the wavelets are chosen to be $\mathcal{W}^{(2)}_J$ it is straightforward to invert each layer of the BLIS module since $\mathbf{x}=\sigma_1(\mathbf{x})-\sigma_2(\mathbf{x})$ and $\mathbf{x}=\sum_{j=0}^J\Psi^{(2)}_j\mathbf{x}+\Phi_J^{(2)}\mathbf{x}$. Indeed, in this setting, each layer of the BLIS module can essentially be viewed as a decomposition of the input signal somewhat analogous to wavelet packets (see e.g., \citet{coifman1992entropy}) .

\section{Proof of Theorem \ref{thm: equivariance}}\label{sec: proof equivariance}

\begin{proof}
Let $\Pi$ be a permutation matrix and let $A'$, $D'$, $L_N',$ $W'$, $K'$, etc, be the analogs of $A, D, L_N, W,$ and $K$ after the permutation. 

One may verify that $A'=\Pi A \Pi^T$, and $D'=\Pi A \Pi^T$ (where one $\Pi$ permutes the rows and the other permutes the columns). Since $\Pi^T\Pi=I=\Pi\Pi^T$, we see that $(D')^{1/2}=\Pi D^{1/2} \Pi^T$. Therefore,
$$
L_N'=I- (\Pi D^{-1/2}\Pi^T)(\Pi A \Pi^T)(\Pi D^{-1/2} \Pi^T)= \Pi L_N \Pi^T.
$$
Thus, we may compute
$$L_N'(\Pi\mathbf{v_i})=\Pi L_N \Pi^T(\Pi\mathbf{v_i})=\lambda_i\Pi \mathbf{v_i},$$
which implies that $\lambda_i'=\lambda_i,$ $\mathbf{v_i}'=\Pi\mathbf{v_i}$, and therefore that the eigendecomposition of $L_N'$ is given by 
$$
L_N' =(\Pi V) \Omega (\Pi V)^T.
$$
Thus, we have 
$$
T' = g(L_N') = (\Pi V) g(\Omega) (\Pi V)^T=(\Pi V) g(\Omega)  V^T \Pi^T= \Pi T \Pi^T.
$$
Additionally, for a suitably nice function $h$ (chosen to be either $p_j$ or $q_j$),
we have 
$$h(T')=\Pi V h(g(\Omega))  V^T \Pi^T=\Pi h(T)\Pi^T.$$
In particular, for either family of wavelets ($\mathcal{W}_J^{(1)}$ or $\mathcal{W}_J^{(2)})$, and we have $F_j'=\Pi F_j\Pi^T$, where, as in Section \ref{sec: BLIS}, $F_j$ is a generic member of the frame. Additionally, both $\sigma_1$ and $\sigma_2$ are element wise operators and thus commute with permutations. Therefore, we have 
\begin{align*}
B'[j_1,k_1,\ldots,j_m,k_m]&=\sigma_m(\Pi F_{j_m} \Pi^T(\sigma_{m-1}\ldots\sigma_1(\Pi F_{j_1} \Pi^T\cdot)\ldots)\\
&=\Pi B[j_1,k_1,\ldots,j_m,k_m]\Pi^T.
\end{align*}
This leads us to
$$
B'[j_1,k_1,\ldots,j_m,k_m]\Pi\mathbf{x}=\Pi B[j_1,k_1,\ldots,j_m,k_m]\mathbf{x}
$$
as desired.

\end{proof}

\section{Proof of Theorem \ref{prop: U nonexpansive}}\label{prf: U nonexpansive}

The proof of Theorem \ref{prop: U nonexpansive} is nearly identical to that of Theorem \ref{prop: bi-Lipschitz}, but relies on the following lemma in place of Lemma \ref{lem: sigmas bi-Lipschitz}. 

\begin{lemma}\label{lem: sigmas nonexpansive} % 
For all $\mathbf{x} \in \mathbb{R}^n$, we have $$\norm{\sigma_{1}(\mathbf{x})}_{\mathbf{w}}^{2} + \norm{\sigma_{2}(\mathbf{x})}_{\mathbf{w}}^{2} = \norm{\mathbf{x}}_{\mathbf{w}}^{2}.$$
\end{lemma}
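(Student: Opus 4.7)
The plan is to reduce the claim to an entirely pointwise identity, exploiting the fact that both $\sigma_1$ and $\sigma_2$ act componentwise and that the weighted norm is just a weighted sum of squared entries. First I would write
\[
\norm{\sigma_{1}(\mathbf{x})}_{\mathbf{w}}^{2} + \norm{\sigma_{2}(\mathbf{x})}_{\mathbf{w}}^{2} = \sum_{i=1}^{n} w_i\bigl(\sigma_1(x_i)^2 + \sigma_2(x_i)^2\bigr),
\]
and compare this to $\norm{\mathbf{x}}_{\mathbf{w}}^2 = \sum_{i=1}^n w_i x_i^2$. Since the $w_i$ are strictly positive, it suffices to prove the scalar identity $\sigma_1(t)^2 + \sigma_2(t)^2 = t^2$ for every $t \in \mathbb{R}$.

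Next I would verify the scalar identity by a two-case analysis on the sign of $t$. If $t \geq 0$, then $\sigma_1(t) = \mathrm{ReLU}(t) = t$ and $\sigma_2(t) = \mathrm{ReLU}(-t) = 0$, so $\sigma_1(t)^2 + \sigma_2(t)^2 = t^2$. If $t < 0$, then $\sigma_1(t) = 0$ and $\sigma_2(t) = -t > 0$, so $\sigma_1(t)^2 + \sigma_2(t)^2 = (-t)^2 = t^2$. Either way the pointwise identity holds. This can also be phrased more compactly via the observation that for every $t$ the pair $(\sigma_1(t), \sigma_2(t))$ has exactly one nonzero coordinate, equal in magnitude to $|t|$, so the sum of squares is $|t|^2$.

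Substituting back into the display above and using $w_i > 0$, the lemma follows. There is essentially no obstacle here; the statement is elementary bookkeeping whose purpose is to serve as the energy-conserving counterpart of Lemma \ref{lem: sigmas bi-Lipschitz}, feeding into the iterative frame argument behind Theorem \ref{prop: U nonexpansive}. The only minor subtlety worth flagging in the write-up is to note that the identity depends on the disjoint-support property $\sigma_1(t)\sigma_2(t) = 0$, which is exactly what motivates using the split ReLU pair instead of the modulus.
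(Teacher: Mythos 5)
Your proof is correct and follows essentially the same route as the paper's: both arguments come down to the observation that at each coordinate exactly one of $\sigma_1(x_i),\sigma_2(x_i)$ can be nonzero and its square equals $x_i^2$, the paper phrasing this as a partition of the index set and you phrasing it as a pointwise scalar identity. (The only nit is that your parenthetical ``exactly one nonzero coordinate'' fails at $t=0$, where both vanish, but your two-case analysis already handles that point correctly.)
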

For a proof of Lemma \ref{lem: sigmas nonexpansive}, please see Appendix \ref{prf: sigmas nonexpansive}.

% Another result we may want from such a transform is that the bi-Lipschitz condition is somewhat inherited from the frame. (Maybe put more "exposition" here?) To prove this result, we will first need another lemma.

\begin{proof}[Proof of Theorem \ref{prop: U nonexpansive}] We proceed inductively on $m$. In the case $m=1$, we apply Lemma \ref{lem: sigmas nonexpansive} to see
\begin{align*}
    \norm{\mathbf{B}_{1}(\mathbf{x})}_{\mathbf{w},2}^2 
= \sum_{k=1}^{2} \sum_{j=0}^{J+1} \norm{B[j,k](\mathbf{x})}_{\mathbf{w}}^2 
= \sum_{j=0}^{J+1} \sum_{k=1}^{2} \norm{\sigma_{k}(F_{j}\mathbf{x})}_{\mathbf{w}}^2 = \sum_{j=0}^{J+1} \norm{F_{j}\mathbf{x}}_{\mathbf{w}}^2
\end{align*}

Therefore, by \eqref{eqn: Frame condition}
we have 
$$c\norm{\mathbf{x}}_{\mathbf{w}}^2 \leq \norm{\mathbf{B}_{1}(\mathbf{x})}_{\mathbf{w},2}^2\leq C\norm{\mathbf{x}}_{\mathbf{w}}^2,$$
which establishes the claim in the base case $m=1$.

Now, assume the result for $m$, i.e.,
$ c^{m}\norm{\mathbf{x}}_{\mathbf{w}}^2  \leq \norm{\mathbf{B}_{m}(\mathbf{x})}_{\mathbf{w},2}^2\leq C^{m}\norm{\mathbf{x}}_{\mathbf{w}}^2$,
and consider  $\norm{\mathbf{B}_{m+1}(\mathbf{x})}_{\mathbf{w},2}^2$. As in the proof of Theorem \ref{prop: bi-Lipschitz}, we have 
\begin{equation*}
B[j_1,k_1,\ldots,j_{m},k_m,j_{m+1},k_{m+1}](\mathbf{x})=B[j_{m+1},k_{m+1}]B[j_1,k_1,\ldots,j_{m},k_m](\mathbf{x}).
\end{equation*}

Therefore, for any fixed $j_1,k_1\ldots,j_m,k_m$, we have 
\begin{align}
\sum_{k_{m+1}=1}^2\sum_{j_{m+1}=0}^J\|B[j_1,k_1,\ldots,j_{m},k_m,j_{m+1},k_{m+1}](\mathbf{x})\|_\mathbf{w}^2
=&\sum_{k_{m+1}=1}^2\sum_{j_{m+1}=0}^J\|B[j_{m+1},k_{m+1}]B[j_1,k_1,\ldots,j_{m},k_m](\mathbf{x})\|_\mathbf{w}^2\nonumber\\
\leq&C\|B[j_1,k_1,\ldots,j_{m},k_m](\mathbf{x})\|_\mathbf{w}^2\nonumber,
\end{align}  
where the final inequality follows by applying the result with $m=1$.
Similarly, 
\begin{align}
\sum_{k_{m+1}=1}^2\sum_{j_{m+1}=0}^J\|B[j_1,k_1,\ldots,j_{m},k_m,j_{m+1},k_{m+1}](\mathbf{x})\|_\mathbf{w}^2\geq c\|B[j_1,k_1,\ldots,j_{m},k_m](\mathbf{x})\|_\mathbf{w}^2.\nonumber
\end{align}  
Therefore, using the inductive hypothesis, we have 

\begin{align*}
    \norm{\mathbf{B}_{m+1}(\mathbf{x})}_{\mathbf{w},2}^2
    =& \sum_{k_{m+1}=1}^{2} \sum_{j_{m+1}=0}^{J+1}\sum_{k_{m}=1}^{2} \sum_{j_{m}=0}^{J+1} \cdots \sum_{k_{1}=1}^{2} \sum_{j_{1}=0}^{J+1}
     \norm{B[j_1,k_1,\ldots,j_{m},k_m,j_{m+1},k_{m+1}](\mathbf{x})}_{\mathbf{w}}^2\\
     \leq&C\sum_{k_{m}=1}^{2} \sum_{j_{m}=0}^{J+1} \cdots \sum_{k_{1}=1}^{2} \sum_{j_{1}=0}^{J+1}
     \norm{B[j_1,k_1,\ldots,j_{m},k_m,j_{m+1},k_{m+1}](\mathbf{x})}_{\mathbf{w}}^2\\
     \leq&C^{m+1}\|\mathbf{x}\|_\mathbf{w}^2,
\end{align*}
and the lower bound follows similarly.

\end{proof}

\section{Proof of Lemma \ref{lem: sigmas bi-Lipschitz}}\label{prf: sigmas bi-Lipschitz}

\begin{proof} 
It suffices to show that for all $a,b\in\mathbb{R}$ we have
\begin{equation}\label{eqn: sigma bilipschitz on R}
\frac{1}{2}|a-b|^2\leq |\sigma_1(a)-\sigma_1(b)|^2 + |\sigma_2(a)-\sigma_2(b)|^2\leq |a-b|^2.
\end{equation}
For then we will have,
\begin{align}
&\frac{1}{2}\|\mathbf{x}-\mathbf{y}\|_\mathbf{w}^2\nonumber\\=&
\frac{1}{2}\sum_{i=1}^n|x_i-y_i|^2w_i\nonumber\\
\leq &
\sum_{i=1}^n|\sigma_1(x_i)-\sigma_1(y_i)|^2w_i + \sum_{i=1}^n|\sigma_2(x_i)-\sigma_2(y_i)|^2w_i\label{eqn: middle value}\\
\leq&\sum_{i=1}^n|x_i-y_i|^2w_i\nonumber\\
=& \|\mathbf{x}-\mathbf{y}\|_\mathbf{w}\nonumber,
\end{align}
which will complete the proof since the term from \eqref{eqn: middle value} is exactly $\|\sigma_1(\mathbf{x}-\mathbf{y})\|_\mathbf{w}^2+\|\sigma_1(\mathbf{x}-\mathbf{y})\|_\mathbf{w}^2$.

To prove \eqref{eqn: sigma bilipschitz on R}, we note that in the case where $a$ and $b$ have the same sign, then either $\sigma_1(a)=|a|$, $\sigma_1(b)=|b|$, and $\sigma_2(a)=\sigma_2(b)=0$ or $\sigma_2(a)=|a|$, $\sigma_2(b)=|b|$, and $\sigma_1(a)=\sigma_1(b)=0$. Either way, we have 
$$ 
|\sigma_1(a)-\sigma_1(b)|^2+|\sigma_2(a)-\sigma_2(b)|^2=|a-b|^2.
$$

In the case where $a$ and $b$ have different signs, assume without loss of generality that $a\geq 0 \geq b$.
Then, $|a-b|=|a|+|b|$ and so the result follows from noting 
\begin{equation*}
 |\sigma_1(a)-\sigma_1(b)|^2+|\sigma_2(a)-\sigma_2(b)|^2=|a|^2+|b|^2\geq \frac{1}{2}(|a|+|b|)^2\end{equation*}
 as well as the fact that $|a|^2+|b|^2\leq (|a|+|b|)^2.$
\end{proof}

\section{Proof of Lemma \ref{lem: sigmas nonexpansive}}\label{prf: sigmas nonexpansive}
\begin{proof}
Let $\mathbf{x}\in\mathbb{R}^n$. 
Let $\mathcal{I}\coloneqq\{i: x_i\neq 0\}$ and note that we may write $\mathcal{I}$ as the disjoint union $\mathcal{I}=\mathcal{I}_1\cup \mathcal{I}_2$ where $\mathcal{I}_1\coloneqq\{i: (\sigma_1(\mathbf{x}))_i\neq 0$\}, $\mathcal{I}_2\coloneqq\{i: (\sigma_1(\mathbf{x}))_i\neq 0$\}. Observe that for $j=1,2$ we have  $|(\sigma_j(\mathbf{x}))_i|^2=|x_i|^2$ whenever $i\in \mathcal{I}_j$.  Therefore,
\begin{align*}
\|\mathbf{x}\|^2_\mathbf{w}&
=\sum_{i\in \mathcal{I}_1} |x_i|^2w_i + \sum_{i\in \mathcal{I}_2} |x_i|^2w_i
=\sum_{i\in \mathcal{I}_1} |(\sigma_1(\mathbf{x}))_i|^2w_i + \sum_{i\in \mathcal{I}_2} |(\sigma_2(\mathbf{x}))_i|^2w_i
=\|\sigma_1(\mathbf{x})\|_\mathbf{w}^2+\|\sigma_2(\mathbf{x})\|_\mathbf{w}^2.\qedhere
\end{align*}
\end{proof}

\section{BLIS-Net Implementation and Computational Complexity}

Here we will discuss the implementation used in our experiments and also a modified implementation that can be used to increase the scalability of our network to large graphs with sparse connectivity.

In our experiments, we chose the diffusion operator $K=P=\frac{1}{2}(I + AD^{-1})$ with dyadic scales. When using $\mathcal{W}^{(2)}$, we computed the powers $P^{2^j}$ iteratively using the formula $P^{2^{j+1}}=P^{2^j}P^{2^j}$ and then computed the wavelets via subtraction. For the $\mathcal{W}^{(1)}_J$ wavelets, we computed an eigendecomposition of $T$ and then applied the functions $q_j$ along the diagonal. This simple implementation requires $\mathcal{O}(Jn^3)$  flops to construct the wavelet matrices for $\mathcal{W}^{(2)}_J$ and $\mathcal{O}(n^3+Jn)$  for $\mathcal{W}^{(1)}_J$. Then to perform the wavelet transform via matrix-vector multiplication we incur a cost of $\mathcal{O}(Jn^2)$ for each signal. Thus if there are $N$ signals in the data set, the total cost of the wavelet transform is $\mathcal{O}(Jn^3+Jn^2N)$. The memory cost of storing the wavelet matrices is $\mathcal{O}(Jn^2)$ for $\mathcal{W}^{(2)}_J$ and $\mathcal{O}(n^2+Jn)$ for $\mathcal{W}_J^{(1)}$. Since BLIS module consists of $m$ iterations of the wavelets followed by $\sigma_1$ and $\sigma_2$, it follows that 
the computational cost of an $m$-layer BLIS module is $\mathcal{O}(Jn^3+2^mJ^mn^2N)$. The memory requirements of storing the BLIS coefficients for $N$ signals is are $\mathcal{O}(2^mJ^mnN)$ (in addition to the memory costs of storing the wavelets). We also note the BLIS module is hand-crafted with no learnable parameters which means that these computations may be done offline as a preprocessing step. %Thereofre

Based on this analysis, a simple implementation of BLIS is linear with respect to the number of signals and therefore is well-suited to scale in the setting where there are many different signals defined on a single moderate-size network (which is the primary focus of this work and is often the case in the context of signal-level tasks). 

It is also possible to modify our implementation to be scalable to large graphs. \citet{tong2022learnable} considered a modifies implentation of the wavlet transform which uses a diffusion module to compute $P\mathbf{x}, P^2\mathbf{x}, P^3\mathbf{x}\ldots,P^{2^J}\mathbf{x}$ via sparse matrix-vector multiplications and then compute the wavelets via vector-vector substraction. Notably,  \cite{wenkel2022overcoming} was able to use this method to achieve strong performance on large OGB benchmark data sets via a scattering-based network. If one implements BLIS in this manner, the computational cost is reduced to $O(2^{J + m} J^m (n + |E|))$  and the memory cost is reduced to  
$O(2^{m+J}J^mn)$ allowing for improved scalability.

\section{Models and Training}

\subsection{BLIS-Net architecture}\label{appendix:BLIS_net_arch}

A general and more complete description of the BLIS module and BLIS-Net architecture is given in Sections \ref{sec: BLIS} and \ref{sec: BLISnet}.
For all BLIS-Net experiments, we utilize dyadic scales and choose $J = 4$ meaning that our $\mathcal{W}_J^{(1)}$ and $\mathcal{W}_J^{(2)}$ wavelet filter banks both contain six filters that we apply to our signal. Furthermore, we fix $m = 3$ meaning that we only utilize third-order coefficients. Our moment aggregation module utilizes first-order moments across the nodes. Our embedding layer and classification layer implented as a single, unified MLP, where the choice of hidden layers was determined from the data using 5-fold cross validation on the testing set. The hidden layer sizes are chosen from the set: $[(50,), (100,), (50, 50), (150, 50)]$. Dimensionality reduction is achieved with the linear layer to the first hidden layer, and the classification is performed with a layer that maps from the final hidden layer dimension to the number of classes. The MLP utilizes ReLU activations in between layers, Adam optimizer, an L2 regularization term of 0.01, and all other default settings on scikit-learn's implementation of the MLP classifier. 

\subsection{Graph Scattering Transform}

For a complete description of the graph scattering transform, please refer to Section \ref{sec: scatttering}. For experiments involving the scattering transform, we utilize dyadic scales and choose $J=4$ meaning that our $\mathcal{W}_J^{(1)}$ and $\mathcal{W}_J^{(2)}$ wavelet filter banks both contain five filters that we apply to our signal (because unlike in BLIS we don't use the low-pass). %Notably this is one less filter than would be used in the BLIS module because the low pass is not incorporated as a filter. 
Following the convention of \citet{gao:graphScat2018} we utilize zeroth-, first-, and second-order scattering coefficients unless otherwise specified. %We differ from \citet{gao:graphScat2018} by not filtering by increasing wavelet scales while constructing scattering paths.
To make a fair comparison with the BLIS module, we use all combination of scales in the second-order coefficients (\citet{gao:graphScat2018} only used $j_2\geq j_1)$ and when performing aggregation we only utilize the first moments. (Notably, one could readily modify the aggregation module in BLIS-Net to include higher moments as well.) The back-end MLP shares an identical construction to the one described in \ref{appendix:BLIS_net_arch} for the BLIS-Net architecture. 

\subsection{Baseline Graph Neural Networks}

\textbf{Graph Convolutional Network (GCN)}: 
The baseline GCN \citep{kipf2016semi} consists of two GCNConv layers, both followed by a ReLU activation function. 
\begin{itemize}
    \item The first GCNConv layer transforms the input features to a hidden dimension of 16.
    \item The second GCNConv layer maintains this dimension, mapping from 16 to 16. 
    \item Following the convolutions, global mean pooling is applied to the node embeddings to obtain a graph-level representation.
    \item Finally, a linear layer is applied which outputs a dimension equal to the number of classes. 
\end{itemize}

\textbf{Graph Isomorphism Network (GIN)}:
The baseline GIN model \citep{xu2018how} is structured with two GINConv layers, each of which is associated with its own MLP.

\begin{itemize}
    \item The first GINConv layer utilizes an MLP that consists of two linear layers:
    \begin{enumerate}
        \item The initial layer transforms the input features to a hidden dimension of 16.
        \item The subsequent layer retains this dimensionality, taking in the 16-dimensional space and outputting another 16-dimensional space. Between these two layers, a ReLU activation function is applied.
    \end{enumerate}
    \item The second GINConv layer has a similar MLP structure, mapping the 16-dimensional space from the first layer to another 16-dimensional space, again with a ReLU activation function in between.
    \item After both GINConv layers process the node features, a global mean pooling aggregates these features to produce a graph-level representation.
    \item This graph-level representation is then processed by a linear layer, transforming from 16 dimensions to a dimensionality equal to the number of classes.
\end{itemize}

\textbf{Graph Attention Network (GAT)}:
The GAT \citep{velivckovic2017graph} baseline is structured with two GATConv layers, each employing an attention mechanism.

\begin{itemize}
    \item The first GATConv layer uses a single attention head, transforming the input features to a hidden dimension of 16.
    \item The second GATConv layer, operating in the same 16-dimensional space, continues this transformation, retaining the dimensionality of 16.
    \item An Exponential Linear Unit (ELU) activation function follows each of the convolutional layers.
    \item After the GATConv layers have processed the node features, a global mean pooling aggregates these features to produce a graph-level representation.
    \item This graph-level representation then undergoes a linear transformation, mapping from the 16-dimensional space to a dimensionality equal to the number of classes.
\end{itemize}

\textbf{General, Powerful, Scalable (GPS) Graph Transformer }
The GPS model \citep{rampavsek2022recipe} is designed to process graph-structured data with the incorporation of random walk-based positional encodings and the GPSConv layer.

Prior to feeding data into the model, random walk positional encodings of length 20 are added to the graph nodes.
\begin{itemize}
    \item \textbf{Embeddings}:
    The input features undergo a linear transformation to produce node embeddings. Additionally, positional encodings are normalized and transformed into a space of dimension 8.
    
    \item \textbf{GPSConv Layers}:
    The architecture employs two GPSConv layers, each featuring:
    \begin{itemize}
        \item A local message passing via GINEConv \citep{hu2019strategies}. This mechanism utilizes a two-layer MLP with hidden dimension 16 with ReLU activations.
        
        \item Multi-head attention with 4 heads and an attention dropout rate of 0.5.
    \end{itemize}
    
    \item \textbf{Classification}:
    Post the GPSConv processing, graph-level embeddings are obtained through a global addition pooling. These embeddings are directed into an MLP with 2 hidden layers to yield the final classification output.
\end{itemize}

\subsection{Training details}

All data sets are subjected to a 70/30 train-test split, and performance metrics are computed by averaging results over a 5-fold cross-validation. For the baseline models, namely the Graph Convolutional Network (GCN), Graph Attention Network (GAT), Graph Isomorphism Network (GIN), and the general, powerful, scalable (GPS) graph Transformer, the Adam optimizer is employed with a learning rate of 0.01. These models are trained for 100 epochs to ensure convergence.

BLIS-Net is trained using the default sci-kit learn training protocol for the MLPClassifier, with full training details available in the documentation.

\section{Additional Descriptions of the data sets}
Here we provide further descriptions of the data sets considered in our experiments and also provide summary statistics in Table \ref{tab:dataset_stats_appendix}.
\begin{table}[]
\centering 
\begin{adjustbox}{width=\linewidth}
\begin{tabular}{@{}l|llllll@{}}
\toprule
Data set statistics        & $|V|$ & $|E|$ & \begin{tabular}[c]{@{}l@{}}Number of \\ signals\end{tabular} & \begin{tabular}[c]{@{}l@{}}Number of \\ classes\end{tabular} & \begin{tabular}[c]{@{}l@{}}Signal \\ dimension\end{tabular} & \begin{tabular}[c]{@{}l@{}}Number of \\ sub-data sets\end{tabular} \\ \midrule
Partly Cloudy             & 39    & 113   & 168                                                          & 3                                                            & 1                                                           & 155                                                               \\ \midrule
Synthetic same $\mu$      & 100   & 358*  & 400                                                          & 2                                                            & 1                                                           & 5                                                                 \\
Synthetic different $\mu$ & 100   & 358*  & 400                                                          & 2                                                            & 1                                                           & 5                                                                 \\ \midrule
PEMSD3                    & 358   & 546   & 26208                                                        & 24,7,4                                                       & 1                                                           & 1                                                                 \\
PEMSD4                    & 307   & 340   & 16992                                                        & 24,7,4                                                       & 3                                                           & 1                                                                 \\
PEMSD7                    & 883   & 866   & 28224                                                        & 24,7,4                                                       & 1                                                           & 1                                                                 \\
PEMSD8                    & 170   & 274   & 17856                                                        & 24,7,4                                                       & 3                                                           & 1                                                                 \\ \bottomrule
\end{tabular}
\end{adjustbox}
\caption{Summary of the Data sets mentioned in the paper. For the traffic data set, the list under number of classes is specified in the context of a particular task. 24 corresponds to the HOUR task, 7 corresponds to the DAY task, and 4 corresponds to the WEEK task. In the case of the Partly Cloudy data set, the number of sub-data sets corresponds to the number of participants for the experiment, and each participant shares the same underlying graph. For the synthetic data set, the number of sub-data sets reflects that 5 replicates were conducted for each task, meaning that 400 unique signals were generated on each of 5 random graphs. This is done to characterize the variation depending on the random generation on the graph. Due to this, the asterisk next to 358 for the number of edges is reflective of the mode of the number of edges for the 5 replicates.}
\label{tab:dataset_stats_appendix}
\end{table}

\subsection{Further details on the traffic data sets }

To construct the signals from the PeMS data, we use the pre-processing procedure introduced in \citet{guo2019attention}. The graph structure is created by selecting sensors at least 3.5 miles apart and connecting adjacent sensors. Missing values in the graph signals are imputed using linear interpolation. PeMSD3, PeMSD4, PeMSD7, and PeMSD8 respectively consist of traffic data from California's 3rd, 4th, 7th, and 8th congressional districts and provide two months of consecutive traffic data collected between 2016 and 2018 depending on the data set. The PeMSD3 and PeMSD7 data sets we used contained a measurement of traffic flow at each sensor location. The PeMSD4 and PeMSD8 data sets used contain three types of measurements at each sensor location: total flow, average speed, and average occupancy. For these data sets, we pass each measurement into the BLIS module independently and then concatenate. 

\subsection{Further details on the Partly Cloudy data sets}

The "Partly Cloudy" data set, sourced from \citet{richardson2018development}, comprises MRI data captured from participants aged 3-12 years and adults as they watched the Disney Pixar animated film "Partly Cloudy". The data set's full title is "MRI data of 3-12 year old children and adults during viewing of a short animated film". The study involved 122 children and 33 adults. While undergoing the MRI scan, participants simply watched the film without any specific task.

The film is notable for portraying the characters' bodily sensations, such as pain, and their mental states. Movie frame annotations—categorizing them as positive, neutral, or negative in emotion—are derived from the labels in the repository of the paper \citet{rieck2020uncovering}.

For data preprocessing and Region of Interest (ROI) extraction, we utilized nilearn. We constructed a spatial connectivity graph from the ROI centroids, linking each centroid to its five closest neighbors. (We then symmetrize the graph by then setting $A_{i,j}=1$ if either $A_{i,j}=1$ or $A_{j,i}=1$). We then applied temporal smoothing to the time series data for each node, using a Gaussian kernel convolution with a $\sigma$ value of 1.75. Since fMRI data is extremely noisy, this temporal smoothing was critical for optimal model performance, as is explored in Table \ref{tab:partly_cloudy_t_smoothing_effect}.

\subsection{Further details on the Synthetic data set}

We consider the functions and graph generation methods described in \ref{sec: experiments synthetic}. We generate the nodes of the graph by sampling 100 points randomly from $[0,1]^2$ and connect each node to it's 5-nearest neighbors. As with the Partly Cloudy data, the graph is symmetrized if necessary. In total, 400 signals are generated per graph, with 200 signals corresponding to $f_j^{(1)}$ and 200 signals corresponding to $f_j^{(2)}$ to result in balanced classes. %In the generation of the signals, we choose $\sigma^j_1 = \sigma_2^j = 0.25$ in the case where $\mu_1^j$ and $\mu_2^j$ are generated independently, and we set $\sigma^j_1 = 0.25, \sigma^j_2 = .125$ in the case where $\mu_1^j = \mu_2^j$. 
We generate 5 versions of this synthetic data set to control for randomness in the sampling of the vertices and the generation of signals.

\section{Ablation Study and Additional Experiments}\label{app: ablation}

BLIS-Net relies on pairing the BLIS module with an aggregation module, a dimension-reduction module (parametrized by an MLP), and a classification module (also parameterized by an MLP). However, one could also utilize the BLIS module in many other ways. In Tables \ref{tab:pems03_appendix}, \ref{tab:pems04_appendix}, \ref{tab:pems07_appendix},  \ref{tab:pems08_appendix}, \ref{tab: synthetic data long}, and  \ref{tab: fmriablation}, we show that the BLIS module can also be paired with shallow classifiers such as logistic regression (LR), random forest (RF), support vector classifier, and extreme gradient boosting (XGB) and also examine the performance of scattering with these same classifiers. (We also consider Scattering + MLP for direct comparison to BLIS-Net.) 
Notably, we perform these experiments on the PEMS04 and PEMS08 data sets in addition to those in the main body.

We see that BLIS-based methods generally perform well and consistently outperform the analogous scattering methods. For example, on the Partly Cloudy fMRI data set, BLIS-Net with the $\mathcal{W}^2_J$ wavelets and a simple logistic regression classifier is able to achieves $65.9\%$ accuracy whereas the corresponding scattering implementation achieves only $53.1\%$. On the traffic data sets, we note that BLIS + XGB has the overall best performance, usually slightly better than BLIS-Net. On the synthetic data and the fMRI data, BLIS-Net is the top performer, followed by BLIS + logistic regression.

%\subsection{Traffic data set}

\begin{table}[htbp]
\centering
\begin{adjustbox}{width=0.7\linewidth}
\begin{tabular}{l|ccc}
\toprule
PEMS03        & HOUR  & DAY   & WEEK  \\ \hline
GCN           & 27.8  & 14.1  & 30.8  \\
GAT           & 27.4  & 14.1  & 30.8  \\
GIN           & 14.0  & 14.3  & 30.8  \\ 
GPS           & 57.4  & 49.6  & 31.9  \\
\midrule
BLIS-Net (W1)   & $63.1$  & $53.1$ & $54.8$ \\
BLIS-Net (W2)   & $68.3$  & \best{$\mathbf{56.3}$} & \best{$\mathbf{61.7}$} \\
\midrule
BLIS + LR (W1)    & 49.0  & 37.0  & 43.4  \\
BLIS + LR (W2)    & 53.0  & 42.2  & 46.7  \\
BLIS + RF (W1)    & 63.4  & 52.3  & 52.9  \\
BLIS + RF (W2)    & 63.5  & 53.4  & 55.7  \\
BLIS + SVC (W1)   & 49.1  & 35.1  & 37.9  \\
BLIS + SVC (W2)   & 49.5  & 35.9  & 41.6  \\
BLIS + XGB (W1)   & \second{68.8}  & 54.0  & 52.6  \\
BLIS + XGB (W2)   & \best{69.2}  & \best{56.3}  & \second{56.2}  \\
Scattering + MLP (W1) & 58.2 & 45.6 & 46.4 \\ 
Scattering + MLP (W2) & 60.4 & 49.5 & 51.4 \\
Scattering + LR (W1)  & 42.3  & 33.1  & 37.8  \\
Scattering + LR (W2)  & 46.0  & 33.2  & 39.1  \\
Scattering + RF (W1)  & 56.0  & 44.8  & 43.9  \\
Scattering + RF (W2)  & 57.9  & 46.6  & 48.4  \\
Scattering + SVC (W1) & 43.5  & 28.3  & 32.6  \\
Scattering + SVC (W2) & 46.5  & 30.5  & 36.1  \\
Scattering + XGB (W1) & 56.7  & 42.8  & 41.6  \\
Scattering + XGB (W2) & 59.5  & 44.6  & 45.7  \\
\bottomrule
\end{tabular}
\end{adjustbox}
\caption{Accuracy on the PEMS03 traffic data set. \best{Best} and \second{second} best results are colored.}\label{tab:pems03_appendix}
\end{table}

\begin{table}[h]
\centering
\begin{adjustbox}{width=0.7 \linewidth}
\begin{tabular}{l|ccc}
\toprule
PEMS04        & HOUR & DAY & WEEK \\ \hline
GCN           & 38.1 & 19.5 & 28.6 \\
GAT           & 38.8 & 19.6 & 28.6 \\
GIN           & 39.8 & 17.7 & 28.6 \\
GPS           & 66.5 & 67.0 & 31.7 \\
\midrule
BLIS-Net (W1)   & 82.9 & 87.8 & 91.1 \\
BLIS-Net (W2)   & 84.2 & 91.9 & 92.3 \\
\midrule
BLIS + LR (W1)    & 74.7 & 71.4 & 69.5 \\
BLIS +  LR (W2)    & 71.5 & 69.4 & 68.2 \\
BLIS +  RF (W1)    & 82.4 & 89.8 & 90.9 \\
BLIS +  RF (W2)    & 80.7 & 88.5 & 89.5 \\
BLIS +  SVC (W1)   & 71.9 & 75.5 & 77.0 \\
BLIS +  SVC (W2)   & 69.5 & 73.5 & 75.6 \\
BLIS +  XGB (W1)   & \best{86.4} & \best{93.9}& \best{93.6} \\
BLIS +  XGB (W2)   & \second{86.1} & \second{92.8}& \second{92.9} \\
\midrule
Scattering  + MLP (W1) & 78.5 & 83.2 & 83.8 \\
Scattering  + MLP (W2) & 81.2 & 85.9 & 86.4 \\
Scattering  + LR (W1)  & 58.8 & 47.7 & 44.7 \\
Scattering  + LR (W2)  & 63.3 & 49.8 & 47.6 \\
Scattering  + RF (W1)  & 76.8 & 79.0 & 79.3 \\
Scattering  + RF (W2)  & 78.4 & 82.9 & 82.6 \\
Scattering  + SVC (W1) & 60.0 & 55.9 & 55.6 \\
Scattering  + SVC (W2) & 64.2 & 62.9 & 61.0 \\
Scattering  + XGB (W1) & 81.3 & 79.1 & 75.8 \\
Scattering + XGB (W2) & 82.6 & 82.9 & 79.8 \\
\bottomrule
\end{tabular}
\end{adjustbox}
\caption{Accuracy on the PEMS04 traffic data set.}
\label{tab:pems04_appendix}
\end{table}

\begin{table}[h]
\centering
\begin{adjustbox}{width=0.7\linewidth}
\begin{tabular}{l|ccc}
\toprule
PEMS07        & HOUR            & DAY            & WEEK           \\ \hline
GCN           & $27.4$  & $14.6$ & $28.5$ \\
GAT           & $26.8$  & $14.6$ & $28.6$ \\
GIN           & $14.3$ & $15.8$ & $28.4$ \\
GPS           &  $39.9$ &  $27.7$ & $30.4$ \\
\midrule
BLIS-Net (W1)   &  ${63.5}$  & $\second{72.9}$ & $\second{76.8}$ \\
BLIS-Net (W2)   & ${63.4}$  & ${71.0}$ & $\best{77.3}$ \\
\midrule
BLIS +  LR (W1)    & $47.3$  & $46.2$ & $54.5$ \\
BLIS +  LR (W2)    & $43.4$  & $41.0$ & $51.3$ \\
BLIS +  RF (W1)    & $60.7$  & $67.7$ & $71.9$ \\
BLIS +  RF (W2)    & $57.4$  & $62.7$ & $67.6$ \\
BLIS +  SVC (W1)   & $51.1$  & $53.7$ & $56.7$ \\
BLIS +  SVC (W2)   & $43.2$  & $41.8$ & $46.9$ \\
BLIS +  XGB (W1)   & $\best{68.6}$  & $\best{74.7}$ & $75.3$ \\
BLIS +  XGB (W2)   & $\second{64.8}$  & $66.5$ & $69.1$ \\
Scattering +  MLP (W1) & $54.0$ & $53.3$ & $56.9$ \\ 
Scattering  + MLP (W2) & $54.3$ & $55.2$ & $61.6$ \\
Scattering  + LR (W1)  & $36.7$  & $33.3$ & $39.4$ \\
Scattering  + LR (W2)  & $36.7$  & $29.9$ & $40.4$ \\
Scattering  + RF (W1)  & $53.5$  & $51.9$ & $52.6$ \\
Scattering  + RF (W2)  & $52.7$  & $53.4$ & $56.5$ \\
Scattering  + SVC (W1) & $39.7$  & $35.5$ & $38.7$ \\
Scattering  + SVC (W2) & $40.6$  & $34.7$ & $42.0$ \\
Scattering  + XGB (W1) & $53.2$  & $50.2$ & $49.1$ \\
Scattering  + XGB (W2) & $54.1$  & $50.9$ & $52.6$ \\
\bottomrule
\end{tabular}
\end{adjustbox}
\caption{Accuracy on the PEMS07 traffic data set.}
\label{tab:pems07_appendix}
\end{table}

\begin{table}[h]
\centering
\begin{adjustbox}{width=0.7\linewidth}
\begin{tabular}{l|ccc}
\toprule
PEMS08        & HOUR & DAY & WEEK \\ \hline
GCN           & 33.3 & 20.4 & 32.3 \\
GAT           & 31.9 & 21.5 & 32.4 \\
GIN           & 24.5 & 14.5 & 32.1 \\
GPS           & 67.7 & 67.9 & 62.3 \\
\midrule
BLIS-Net (W1)   & 83.9 & 92.9 & 93.4 \\
BLIS-Net (W2)   & 85.9 & 94.9 & \second{95.6} \\
\midrule
BLIS + LR (W1)    & 69.5 & 76.5 & 78.1 \\
BLIS +  LR (W2)    & 71.9 & 80.2 & 81.8 \\
BLIS +  RF (W1)    & 83.7 & 92.7 & 91.6 \\
BLIS +  RF (W2)    & 83.9 & 93.5 & 93.6 \\
BLIS +  SVC (W1)   & 72.4 & 85.2 & 84.9 \\
BLIS +  SVC (W2)   & 73.8 & 87.4 & 89.5 \\
BLIS +  XGB (W1)   & \second{87.2} &\second{95.1} & 94.7 \\
BLIS +  XGB (W2)   & \best{87.7} & \best{96.0} & \best{96.1} \\
\midrule
Scattering + MLP (W1) & 81.0 & 89.9 & 89.3 \\
Scattering + MLP (W2) & 82.2 & 92.0 & 90.7 \\
Scattering  + LR (W1)  & 56.6 & 56.1 & 54.1 \\
Scattering  + LR (W2)  & 58.6 & 60.1 & 57.8 \\
Scattering  + RF (W1)  & 79.2 & 86.0 & 84.2 \\
Scattering  + RF (W2)  & 80.6 & 88.1 & 87.9 \\
Scattering  + SVC (W1) & 63.4 & 71.5 & 66.2 \\
Scattering  + SVC (W2) & 66.1 & 76.6 & 72.0 \\
Scattering  + XGB (W1) & 81.8 & 87.2 & 81.9 \\
Scattering  + XGB (W2) & 83.6 & 89.6 & 86.1 \\
\bottomrule
\end{tabular}
\end{adjustbox}
\caption{Accuracy on the PEMS08 traffic data set.}
\label{tab:pems08_appendix}
\end{table}

\begin{table}[]
\centering
\begin{adjustbox}{width=0.7\linewidth}
\begin{tabular}{@{}l|cc@{}}
\toprule
Synthetic     & Different $\mu$ & Same $\mu$     \\ \midrule
GCN           & $99.0 \pm 0.4$  & $91.7 \pm 2.0$ \\
GAT           & $99.2 \pm 0.5$  & $91.6 \pm 2.0$ \\
GIN           & $99.5 \pm 0.2$  & $91.3 \pm 1.4$ \\
GPS           & $95.4 \pm 5.9$  & $97.7 \pm 0.9$ \\
 \midrule
 BLIS-Net (W1) & \best{$\mathbf{100.0 \pm 0.0}$} & $97.7 \pm 0.5$ \\
 BLIS-Net (W2) & $99.5 \pm 0.3$  & \second{$\mathbf{98.6 \pm 0.4}$} \\
 \midrule
 BLIS + LR (W1) & \best{$\mathbf{100.0 \pm 0.0}$} & $98.5 \pm 1.0$ \\
 BLIS  + LR (W2) & \best{$\mathbf{100.0 \pm 0.0}$} & \best{$\mathbf{98.8 \pm 0.4}$} \\
 BLIS  + RF (W1) & $99.2 \pm 0.4$  & $97.7 \pm 0.6$ \\
 BLIS  + RF (W2) & $99.4 \pm 0.1$  & $97.1 \pm 0.7$ \\
 BLIS  + SVC (W1) & $99.4 \pm 0.1$ & $95.7 \pm 1.5$ \\
 BLIS +  SVC (W2) & \best{$\mathbf{100.0 \pm 0.0}$} & $95.5 \pm 1.9$ \\
 BLIS  + XGB (W1) & $99.5 \pm 0.3$ & $98.4 \pm 0.1$ \\
 BLIS  + XGB (W2) & $99.3 \pm 0.0$ & $97.7 \pm 0.7$ \\
 \midrule
 Scattering  + MLP (W1) & $97.7 \pm 1.0$ & $96.5 \pm 1.2$ \\ 
Scattering  + MLP (W2) & $88.3 \pm 4.3$ & $96.8 \pm 1.0$ \\ 
Scattering  + LR (W1) & $97.7 \pm 0.7$ & $96.1 \pm 1.0$ \\
Scattering  + LR (W2) & $86.9 \pm 4.9$ & $95.3 \pm 1.5$ \\
Scattering  + RF (W1) & $95.9 \pm 1.6$ & $94.5 \pm 1.6$ \\
Scattering  + RF (W2) & $73.4 \pm 8.0$ & $94.1 \pm 1.6$ \\
Scattering  + SVC (W1) & $98.1 \pm 0.9$ & $95.0 \pm 1.5$ \\
Scattering  + SVC (W2) & $87.5 \pm 4.9$ & $93.5 \pm 1.8$ \\
Scattering  + XGB (W1) & $95.2 \pm 1.9$ & $93.1 \pm 2.5$ \\
Scattering  + XGB (W2) & $81.9 \pm 7.2$ & $94.7 \pm 1.5$ \\
\bottomrule
\end{tabular}
\end{adjustbox}
\caption{Accuracy on the synthetic data sets. }\label{tab: synthetic data long}

\end{table}

\subsection{Denoising in the fMRI data set}

fMRI data is extremely noisy. Therefore, in our experiments on the fMRI data, we performed a Gaussian smoothing over the time variable. Importantly, we note that we applied the same smoothing procedure for all methods. Results with and without the smoothing are shown in Table \ref{tab:partly_cloudy_t_smoothing_effect}. We see that without the smoothing all methods perform poorly, with GPS being the top performing method at $42.0\%$ followed closely by BLIS-Net (W2) at $41.5\%$.  After the smoothing, the message passing networks (GCN, GAT, and GIN) continue to perform poorly (at most $42.1\%$). GPS improves from $42.0$ to $56.4\%$, scattering improves from $40.3 / 40.7 \%$ to $60.6 / 62.3\%$ and BLIS-Net improves from $41.1 / 41.5\%$ to $67.1 / 68.3\%$.

\begin{table}[]
\centering
\begin{adjustbox}{width=0.7\linewidth}
\begin{tabular}{@{}l|c@{}}
\toprule
Partly Cloudy & Emotion classification \\ \midrule
GCN           & $39.3 \pm 5.9$         \\
GAT           & $39.3 \pm 6.0$        \\
GIN           & $42.1 \pm 6.0$        \\ 
GPS           & $56.4 \pm 4.3$ \\
\midrule
BLIS-Net (W1)       & \second{$\mathbf{67.1 \pm 4.3}$}        \\
BLIS-Net (W2)       & \best{$\mathbf{68.3 \pm 3.6}$}         \\
\midrule
BLIS +  LR (W1) & $62.4 \pm 5.4$ \\
BLIS +  LR (W2) & $65.9 \pm 5.2$ \\
BLIS +  RF (W1) & $61.5 \pm 5.2$ \\
BLIS +  RF (W2) & $63.0 \pm 4.5$ \\
BLIS +  SVC (W1) & $56.2 \pm 5.3$ \\
BLIS +  SVC (W2) & $59.0 \pm 5.0$ \\
BLIS +  XGB (W1) & $61.1 \pm 5.7$ \\
BLIS +  XGB (W2) & $62.8 \pm 5.1$ \\
\midrule
Scattering +  MLP (W1) & $60.6 \pm 4.9$ \\ 
Scattering + MLP (W2) & $62.3 \pm 5.1$ \\ 
Scattering + LR (W1) & $51.2 \pm 5.8$ \\
Scattering +  LR (W2) & $53.1 \pm 5.9$ \\
Scattering  + RF (W1) & $56.1 \pm 6.0$ \\
Scattering  + RF (W2) & $58.8 \pm 5.5$ \\
Scattering  + SVC (W1) & $51.5 \pm 5.9$ \\
Scattering  + SVC (W2) & $54.2 \pm 6.1$ \\
Scattering  + XGB (W1) & $56.2 \pm 6.0$ \\
Scattering +  XGB (W2) & $58.3 \pm 5.8$ \\
\bottomrule
\end{tabular}
\end{adjustbox}
\caption{ Accuracy on Partly Cloudy fMRI data.  }
\label{tab: fmriablation}
\end{table}

\begin{table}[]
\centering
\begin{adjustbox}{width=\linewidth}
\begin{tabular}{@{}l|c|c@{}}
\toprule
Partly Cloudy & Emotion classification (No smoothing) & Emotion classification (with smoothing) \\ \midrule
GCN           & $37.5 \pm 4.9$                       & $39.3 \pm 5.9$         \\
GAT           & $37.3 \pm 4.8$                       & $39.3 \pm 6.0$        \\
GIN           & $37.1 \pm 4.5$                       & $42.1 \pm 6.0$        \\ 
GPS           & \best{$\mathbf{42.0 \pm 4.3}$}      & $56.4 \pm 4.3$ \\
\midrule
Scattering (W1) & $40.3 \pm 5.3$                     & $60.6 \pm 4.9$ \\ 
Scattering (W2) & $40.7 \pm 5.8$                     & $62.3 \pm 5.1$ \\ 
\midrule
BLIS-Net (W1)  & \third{${41.1 \pm 5.0}$}            & \second{$\mathbf{67.1 \pm 4.3}$}        \\
BLIS-Net (W2)  & \second{$\mathbf{41.5 \pm 5.5}$}    & \best{$\mathbf{68.3 \pm 3.6}$}         \\
\bottomrule
\end{tabular}
\end{adjustbox}
\caption{ Effect of Gaussian temporal smoothing on the accuracy on Partly Cloudy fMRI data. }
\label{tab:partly_cloudy_t_smoothing_effect}
\end{table}

\end{document}

% --- supplement: Submission/supplement.tex ---

% If your paper is accepted and the title of your paper is very long,
% the style will print as headings an error message. Use the following
% command to supply a shorter title of your paper so that it can be
% used as headings.
%
%\runningtitle{I use this title instead because the last one was very long}

% If your paper is accepted and the number of authors is large, the
% style will print as headings an error message. Use the following
% command to supply a shorter version of the authors names so that
% they can be used as headings (for example, use only the surnames)
%
%\runningauthor{Surname 1, Surname 2, Surname 3, ...., Surname n}

% Supplementary material: To improve readability, you must use a single-column format for the supplementary material.
\onecolumn
\aistatstitle{
Supplementary Materials for BLIS-Net: Classifying and Analyzing Signals on Graphs}